\def\eqref#1{equation~\ref{#1}}
\def\1{\bm{1}}
\def\mD{{\bm{D}}}
\def\mI{{\bm{I}}}
\def\mW{{\bm{W}}}
\def\mX{{\bm{X}}}
\DeclareMathAlphabet{\mathsfit}{\encodingdefault}{\sfdefault}{m}{sl}
\SetMathAlphabet{\mathsfit}{bold}{\encodingdefault}{\sfdefault}{bx}{n}
\newtheorem{defin}{Definition}
\newtheorem{prop}{Proposition}
\newtheorem{theorem}{Theorem}
\newtheorem{hypothesis}{Hypothesis}
\newtheorem{explanation}{Explanation}
\DeclareMathOperator{\doop}{\textit{do}}
\DeclareMathOperator{\pa}{pa}
\DeclareMathOperator{\Pa}{Pa}
\DeclareMathOperator{\tr}{\text{tr}}
\newcommand{\inner}[1]{\langle #1 \rangle}
\newcolumntype{P}[1]{>{\centering\arraybackslash}p{#1}}
\newtcolorbox{hintBox}{textmarker,
    borderline west={6pt}{0pt}{yellow},
    colback=yellow!10!white}
\newcommand{\textbox}[1]{\begin{tcolorbox}[width=\columnwidth, halign=left, colframe=black, colback=white, boxsep=0mm, arc=2mm] #1 \end{tcolorbox}}
\title{Causal Explanations of Structural Causal Models}
\author{\textbf{Matej Zečević}\textsuperscript{\rm 1,$\dagger$}
\quad\textbf{Devendra Singh Dhami}\textsuperscript{\rm 1,3} \quad Constantin A. Rothkopf\textsuperscript{\rm 2,3} \quad \textbf{Kristian Kersting}\textsuperscript{\rm 1-5}\\
\textsuperscript{\rm 1}Computer Science Department, TU Darmstadt, \textsuperscript{\rm 2}Centre for Cognitive Science, TU Darmstadt,\\ \textsuperscript{\rm 3}Institute for Psychology, TU Darmstadt, \textsuperscript{\rm 4}Hessian Center for AI (hessian.AI),\textsuperscript{\rm 5}DFKI\\ \textsuperscript{\rm $\dagger$}correspondence:\ \texttt{matej.zecevic@tu-darmstadt.de}\\
\vspace{0cm}
}
\begin{document}

\maketitle

\begin{abstract}
  In explanatory interactive learning (XIL) the user queries the learner, then the learner explains its answer to the user and finally the loop repeats. XIL is attractive for two reasons, (1) the learner becomes better and (2) the user's trust increases. For both reasons to hold, the learner's explanations must be useful to the user and the user must be allowed to ask useful questions. Ideally, both questions and explanations should be grounded in a causal model since they avoid spurious fallacies. Ultimately, we seem to seek a causal variant of XIL. The question part on the user's end we believe to be solved since the user's mental model can provide the causal model. But how would the learner provide causal explanations? In this work we show that existing explanation methods are not guaranteed to be causal even when provided with a Structural Causal Model (SCM). Specifically, we use the popular, proclaimed causal explanation method CXPlain to illustrate how the generated explanations leave open the question of truly causal explanations. Thus as a step towards causal XIL, we propose a solution to the lack of causal explanations. We solve this problem by deriving from first principles an explanation method that makes full use of a given SCM, which we refer to as SC\textbf{E} (\textbf{E} standing for explanation). Since SCEs make use of structural information, any causal graph learner can now provide human-readable explanations. We conduct several experiments including a user study with 22 participants to investigate the virtue of SCE as causal explanations of SCMs.
\end{abstract}

\section{Introduction}
There has been an exponential rise in the use of machine learning, especially deep learning in several real-world applications such as medical image analysis~\citep{ker2017deep}, particle physics~\citep{bourilkov2019machine}, drug discovery~\citep{chen2018rise} and cybersecurity~\citep{xin2018machine} to name a few. While there have been several arguments that claim deep models are interpretable, the practical reality is much to the contrary. The very reason for the extraordinary discriminatory power of deep models (namely, their depth) is also the reason for their lack of interpretability. To alleviate this shortcoming, interpretable and explainable AI/ML~\citep{chen2018looks,molnar2020interpretable} has gained traction to explain algorithm predictions and thereby increase the trust in the deployed models. However, providing explanations to increase user trust is only part of the problem. Ultimately, explanations or interpretations (however one defines these otherwise ill-posed terms) are a means for humans to understand something---in this case the deployed AI model. Therefore, a closed feedback loop between user and model is necessary for both boosting trust through understanding/transparency and improving models robustness by exposing and correcting their shortcomings. The new paradigm of XIL \citep{teso2019explanatory} offers exactly the described where a model can be ``right or wrong for the right or wrong reasons'' and depending on the specific scenario the user-model interaction will adapt (e.g.\ giving the right answer and a correction when being ``wrong for the wrong reasons'').

Now the question arises, what would constitute a good explanation inline with human reasoning? In their seminal book, \citet{pearl2018book} argue that causal reasoning is the most important factor for machines to achieve true human-level intelligence and ultimately constitutes the way humans reason. Several works in cognitive science are indeed in support of Pearl's counterfactual theory of causation as a great tool to capture important aspects of human reasoning \citep{gerstenberg2015whether,gerstenberg2017eye} and thereby also how humans provide explanations \citep{lagnado2013causal}. The authors in \citet{hofman2021integrating} even argue that systems that are efficient in both causality and explanations are need of the hour. Questions of the form ``What if?'' and ``Why?'' have been shown to be used by children to learn and explore their external environment~\citep{gopnik2012scientific,buchsbaum2012power} and are essential for human survival \citep{byrne2016counterfactual}. These humane forms of causal inferences are part of the human mental model which can be defined as the illustration of one's thought process regarding the understanding of world dynamics (see also discussions in \citet{simon1961modeling,nersessian1992theoretician,chakraborti2017plan}). All these views make understanding and reasoning about causality an inherently important problem and suggest that what we truly seek is a causal variant of XIL in which the explanations of the model are grounded in a causal model and the user is also allowed to give feedback about causal facts.

While acknowledging the difficulty of the problem we address it pragmatically by leveraging qualitative (partial) knowledge on the SCM \citep{pearl2009causality} justifying the naming of our truly causal explanations (by construction) that we will be referring to as \emph{Structural Causal \textbf{Explanation}} (SCE). The motivation behind this work is the observation that spurious associations in the training data inevitably leads to the failure of non-causal models and explanation methods. For example, an image classifier that learns on watermarked images will have high accuracy on the test data from the same distribution but it will be ``right for the wrong reasons'' and furthermore the ``right'' part is brittle as it will fail when moving out-of-distribution \citep{lapuschkin2019unmasking}. In psychology this phenomenon of spurious association fallacy is known as ``Clever Hans'' behavior named after the 20th century Orlov Trotter horse Hans that was wrongly believed to be able to perform arithmetic \citep{pfungst1911clever}. Some works such as \citep{stammer2021right} moved beyond basic methods (like heat maps for image data) using a XIL setup to move beyond ``Clever Hans'' fallacies. Some other works purely on the explanation part made an effort in devising a ``causal'' explanation algorithm to avoid spuriousness \citep{schwab2019cxplain}, however, as we show in this work they still leave open the question of truly causal explanations--a gap that we fit. We provide a new, natural language expressible (thus, human understandable) explanation algorithm with SCE.

Overall, we make several contributions: \textbf{\textcolor{blue}{(I)}} we devise from first principles a new algorithm (SCE) for computing explanations from SCM making them truly causal explanations by construction, \textbf{\textcolor{blue}{(II)}} we showcase how SCE fixes several of the shortcomings of previous explainers, \textbf{\textcolor{blue}{(III)}} we apply the SCE algorithm to several popular causal inference methods, \textbf{\textcolor{blue}{(IV)}} we discuss using a synthetic toy data set how one could use SCE for improving model learning, and finally \textbf{\textcolor{blue}{(V)}} we perform a survey with 22 participants to investigate the difference between user and algorithmic SCE.

We make our code repository publically available at: \url{https://anonymous.4open.science/r/Structural-Causal-Explanations-D0E7/}

\section{Background and Related Work} \label{sec:two}
We briefly review key concepts from previous and related work to establish a high-level understanding of the basics needed for the discussion in this paper.

{\bf Causality.} Following the Pearlian notion of Causality \citep{pearl2009causality}, an SCM is defined as a 4-tuple $\mathcal{M}:=\inner{\mathbf{U},\mathbf{V},\mathcal{F},P(\mathbf{U})}$ where the so-called structural equations (which are deterministic functions)
$v_i \leftarrow f_i(\pa_i,u_i) \in \mathcal{F}$
assign values (denoted by lowercase letters) to the respective endogenous/system variables $V_i\in\mathbf{V}$ based on the values of their parents $\Pa_i\subseteq \mathbf{V}\setminus V_i$ and the values of some exogenous variables $\mathbf{U}_i\subseteq \mathbf{U}$ (sometimes also referred to as unmodelled or nature terms), and $P(\mathbf{U})$ denotes the probability function defined over $\mathbf{U}$. The SCM formalism comes with several interesting properties. They induce a causal graph $G$, they induce an observational/associational distribution over $\mathbf{V}$ (typical question ``What is?'', example ``What does the symptoms tell us about the disease?''), and they can generate infinitely many interventional/hypothetical distributions (typical question ``What if?'', example ``What if I take an aspirin, will my headache be cured?'') and counterfactual/retrospective distributions (typical question ``Why?'', example ``Was it the aspirin that cured my headache?'') by using the $\doop$-operator which ``overwrites'' structural equations. Note that, opposed to the Markovian SCM discussed in for instance \citep{peters2017elements}, the definition of $\mathcal{M}$ is semi-Markovian thus allowing for shared $U$ between the different $V_i$. Such a shared $U$ is also called \emph{hidden confounder} since it is a common cause of at least two $V_i,V_j (i\neq j)$. Opposite to that, a ``common'' confounder would be a common cause from within $\mathbf{V}$. In the case of linear SCM, where the structural equations $f_i$ are linear in their arguments, we call the coefficients \emph{``dependency terms''} and they fully capture the causal effect of $V_i$ onto $V_j$ denoted as $\alpha_{i \rightarrow j}$, that is, how changing $V_i$ will change $V_j$. The depedency terms of non-linear SCM are generally more involved, so $\alpha_{i \rightarrow j}$ cannot be obtained by simply reading off coefficients in $f_i$. In most settings the causal effect, e.g. of a medical treatment onto the patient's recovery, is the sought quantity of interest. If interventions are admissible, then the average causal (or treatment) effect (ACE/ATE) within a binary system is defined as a difference in interventional distributions, that is $\alpha_{i \rightarrow j} := \mathbb{E}[V_j\mid \doop(V_i=1)] - \mathbb{E}[V_j\mid \doop(V_i=0)]$. A great deal of research in causality (especially for ML) is concerned with leveraging observational data to reason about causal relationships (also known as identification), which often times overlaps with the broader study of identifying the graphical structure underlying the data distribution. For example, \citet{ke2019learning} made use of data from the first two levels of Pearl's Causal Hierarchy (PCH), namely observational and interventional, to update their graph estimate while using masked neural networks to mimic the structural equations. Most of the time in causality we seek a Directed Acyclic Graph (DAG) that is consistent with our available data and assumptions, on that end \citet{NEURIPS2018_e347c514} proposed a continuous approach to learning DAGs efficiently. A followup work by \citet{brouillard2020differentiable} then combined ideas from both previous works to provide an efficient causal DAG learner.

{\bf Explainable and Interpretable AI/ML.} A great body of work within deep learning has provided visual means for explanations of how a neural model came up with its decision i.e., importance estimates for a model’s prediction are being mapped back to the original input space e.g.\ raw pixels in the arguably standard use-case of computer vision \citep{selvaraju2017grad,schulz2020restricting}. Formally defined in \citep{sundararajan2017axiomatic}, we simply have that $A_F(\mathbf{x}) = (a_1, \dots, a_n) \in \mathbb{R}^n$ is an attribution of predictive model $F$ when $a_i$ is the contribution of $x_i$ for prediction $F(\mathbf{x})$ (with $\mathbf{x} = (x_1, \dots, x_n)\in \mathbb{R}^n$). Recently, \citet{stammer2021right} argued that such explanations are insufficient for any task that requires symbolic-level knowledge while comparing the existing state of explanations to ``children that are only able to point fingers but lack articulation''. \citep{stammer2021right} therefore proposed a neuro-symbolic explanation scheme to revise and ultimately circumvent ``Clever Hans'' like behavior from learned models in a XIL user-model loop \citep{teso2019explanatory}. On the causal end, \citep{schwab2019cxplain} proposed a model-agnostic approach that can generate explanations following the idea of Granger causality (which is very different from Pearlian causality as it captures ``temporal relatedness'' which holds in their setting as input precedes output). Specifically, they train a surrogate model to capture to what degree certain inputs cause outputs in the model to be explained. They achieve this by simply comparing the prediction loss of the model for the original input $\mathcal{L}(y,\hat{y}_X)$ (where $X$ is the input) with the alternate prediction loss when a certain feature $i$ is being removed $\mathcal{L}(y,\hat{y}_{X\setminus\{i\}})$. On the Pearlian side of explanations, the arguably closest works on explainable AI/ML can be found in research around fairness \citep{kusner2017counterfactual,plecko2022causal}. For instance, \citet{karimi2020algorithmic} investigated how to best find a counterfactual that flips a decision of interest e.g.\ an applicant for a credit is rejected and the question is now which counterfactual setting (changes to the applicant) would have resulted in a credit approval. From a purely causal viewpoint, our work might be compared to the definitions of \citet{halpern2016actual} for ``actual causation.''

\section{Deriving a Causal Explanation for the Causal Hans\protect\footnote{As analogue and wordplay to the ``Clever Hans'' notion, see Introduction.} Example}\label{sec:five}
\textbf{Causal Hans Example.} Consider the setting from Fig.\ref{fig:overview} which highlights the Causal XIL loop that we envision. The users, for instance a medical doctor (he) and a ML developer (she), have access to some data sets, for instance a set of patients's medical records. The ML developer will train a model that tries its best in learning about the data distribution's underlying causal relations. The M.D.\ on the other hand might make an observation such as for the patient named Hans that his mobility is below average, thus raising the question ``Why is Hans's Mobility bad?''. Given a causal explanation method, we could now derive an explanation that is grounded in the learned causal model to answer the question about the patient Hans in a causal manner---thus the naming Causal Hans example. Not satisfied with the answer, the M.D.\ decides to tell the ML developer about the inconsistencies he observed between the generated model explanation and his structural intuition. Subsequently, the ML developer adapts the training procedure with the corrections, ultimately resulting in a new learned structure whose generated explanation satisfies the M.D.\ and moves the learned model closer to the latent true SCM. This completes the Causal XIL setup as we envision, however, what is the causal explanation method? In the following we will derive from first principles a new explanation scheme to answer an observation/question (using our running example about a patient named Hans) that is consistent with a given SCM thereby constituting a causal explanation of a SCM by construction.

\begin{figure*}[t]
    \centering
    \includegraphics[width=0.95\textwidth]{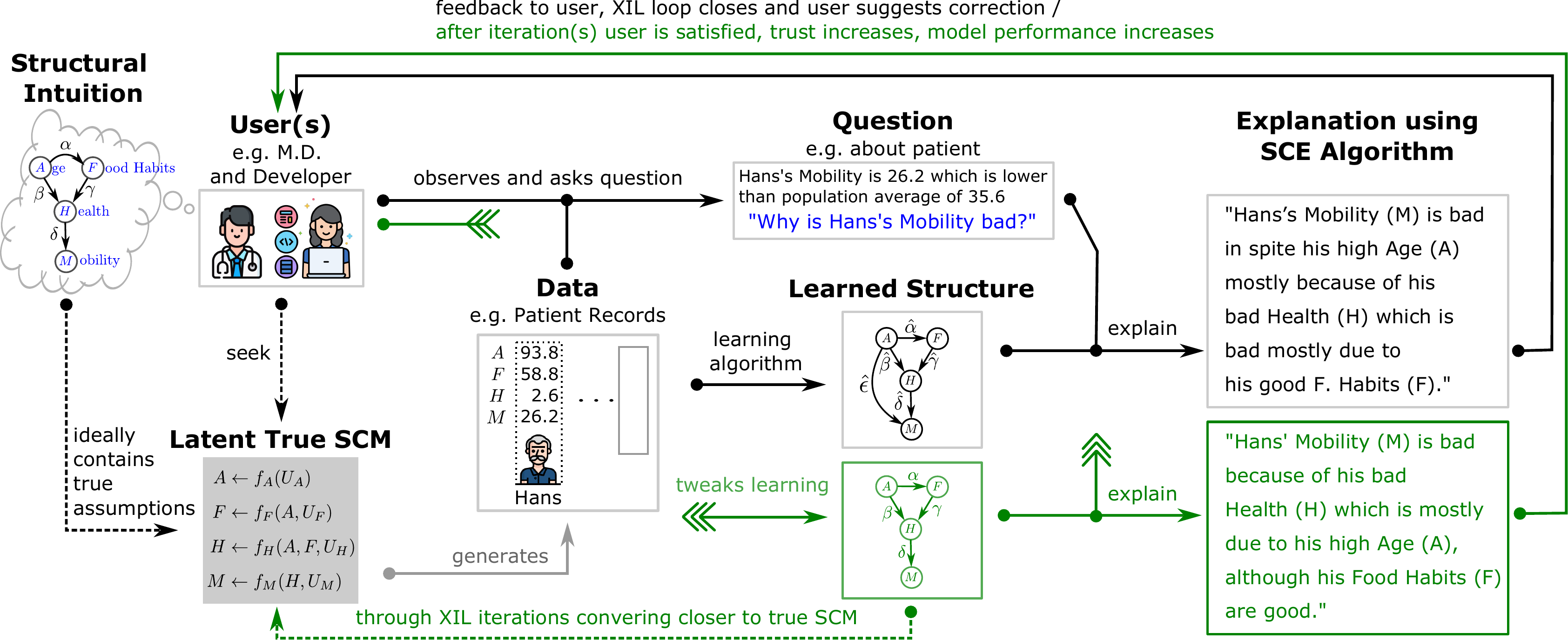}
    \caption{\textbf{Causal XIL illustrated on the Causal Hans Example.} \label{fig:overview}
    Refer to the first paragraph ``Causal Hans Example'' of Section \ref{sec:five} for a detailed description. We propose the SCE algorithm that allows for truly causal explanations in Causal XIL. (Best viewed in color.)
    }
\end{figure*}

\subsection{Deriving a Computable Algorithm for Causal Explanations}
We will use the Causal Hans example that we just introduced as our workbench to derive a computable algorithm for truly causal explanations. For the sake of simplicity we assume that the structural intuition that the user/expert can capture about the problem using their mental model can be described as an SCM\footnote{While several works in cognitive science support the fact that humans are capable of judging causal relationships both qualitatively and quanitatively it is not generally true that the human mental model \emph{is} in fact an SCM but rather that parts of it can be represented by it.}. In the following, consider an SCM that generates patients's medical records described by numerical representations for age, nutrition, overall health and mobility respectively ($\mathbf{V}=\{A,F,H,M\}$). Next, let's consider some samples from said SCM. E.g.\ we might observe the data set containing the individual named Hans $\text{H}=(a_H,f_H,h_H,m_H)=(93.8,58.8,2.6,26.2)$ where each value could be associated with a discrete label e.g.\ $a_H=93.8$ would be 93 years old, whereas $m_H=26.2$ refers to a rather immobile person. The latter label is actually implicitly the assessment $m_H < \mu^M$ where $\mu_M=35.6$ is the average mobility value for the population, that is, we observe Hans to be a rather immobile person \emph{when compared (or relative) to the whole population}. With this we are in the position to pose a question like
\begin{equation*}
    \textbf{Q1:}\textit{``Why is Hans's Mobility bad?''}
\end{equation*} 
where the word ``bad'' refers to ``bad relative to the population.'' Formally, we can now define such an observation or question as:
\begin{defin}[Why Question] \label{def:why}
Let $x \in \text{Val}(X)$ be an instance of $X\in\mathbf{V}$ of SCM $\mathcal{M}$. Further, let $\mu^X$ be the empirical mean for a set of samples ($\mu^X := \frac{1}{n}\sum^n_i x_i$) and let $R\in \{<, >\}$ be a binary ordering relation. We call $Q_X := R(x, \mu^X)$ a (single) why question if $Q_X$ is true.
\end{defin}
Checking back with the definition, we see that \textbf{Q1} defines a valid question for the Causal Hans example since $Q_M := m_H < \mu^M = 26.2 < 35.6$ evaluates to true.

Next, we will discuss the structural intuition of the user (e.g.\ the M.D.\ in Fig.\ref{fig:overview}). Generally, the true SCM $\mathcal{M}^*$ is latent but we can realistically expect to have access to partial knowledge (or estimate) of $\mathcal{M}^*$. Say, the user has intuition for an SCM $\mathcal{M}$ that contains the relations $A\overset{\alpha}{\rightarrow} F$, $A\overset{\beta}{\rightarrow} H$, $F\overset{\gamma}{\rightarrow} H$, $H\overset{\delta}{\rightarrow} M$ where $\alpha, \beta, \gamma, \delta$ denotes the respective causal effects\footnote{Let's assume that $\mathcal{M}$ is linear for the moment. The key property needed here is just some kind of value estimate for the causal effect, which for linear SCM is simply the coefficients, that is $\alpha_{F\rightarrow H}=\gamma$ etc. Thus w.l.o.g.\ we use it here to make the example more clear.} Further, $\alpha, \gamma, \delta > 0$ while $\beta < 0$ meaning that for instance an aging (higher value for $A$) will have a negative, decreasing effect on health (smaller value for $H$), and also $\beta > \gamma$ meaning the causal effect of aging on health is greater in absolute terms than the one of food habits onto health. Now when we intend on answering \textbf{Q1} it seems reasonable to start with the queried variable first, mobility in this case. We observe that $M$ is an effect of $H$ with $\gamma > 0$ meaning that since Hans has also below average health (and not just below average mobility) and lower health translates to lower mobility that $m_H$ is inline with $h_H$. Traversing the chain further to the causes of $H$, which are $A,F$ in $\mathcal{M}$ we observe two different scenarios. Since $A$ is above average as Hans is an elderly person and $\beta < 0$ we can conclude that $a_H$ is definitely an explanation for $h_H$ whereas $F$ with $\gamma > 0$ is actually a countering factor since Hans has a good diet beneficial to health. In summary, by exploiting the knowledge on $\mathcal{M}$ we have arrived at a causal explanation that can be pronounced in natural language as:
\begin{explanation}[for \textbf{Q1}] \label{int:hans}
``Hans's Mobility is bad because of his bad Health which is mostly due to his high Age although his Food Habits are good.''
\end{explanation}
Explanation \ref{int:hans} is a truly causal answer to the observation about Hans's mobility deficiency based on SCM $\mathcal{M}$. It captures both the existence and the ``strength'' of a causal relation. In the following we will capture and formalize our intuition that allowed us to derive Exp.\ref{int:hans}. This will allows us to move towards computing such causal explanations automatically.

We mainly used four ideas or pieces of knowledge in our argument above: (I) that there is a relative notion in the why question $Q_M$ like ``why \dots bad?'' that implicitly compares an individual (here, Hans) to the remaining population, (II) note that by definition there can only exist a causal effect from some variable to another \emph{if and only if} one is the argument of the other in a structural equation of $\mathcal{M}$, (III) the causal effect $\alpha_{X\rightarrow Y}$ allows us to assert whether the observed values for $(x,y)$ are ``surprising'' or not (e.g.\ it was not surprising that $m_H < \mu^M$ after observing $h_H > 0$ and knowing that $\gamma > 0$ since decreasing health means decreasing mobility in general and Hans is old), and (IV) that some causal effects are more important or influental than others (e.g.\ age versus food habits w.r.t.\ health). We can neatly collect all information from (I-III) in a single tuple which we call causal scenario.
\begin{defin} \label{def:cs}
The tuple $C_{XY}{:=}(\alpha_{X\rightarrow Y},x,y,\mu^X,\mu^Y)$ is called causal scenario.
\end{defin}
The (IV) point we can capture separately as will be shown below. Now, we finally express our build up intuition and understanding into rules expressed in first-order logic that will then allow us to compute causal explanations like Exp.\ref{int:hans} automatically.
\begin{defin}[Explanation Rules] \label{def:rules}
Let $C_{XY}$ denote a causal scenario, let $s(x) \in \{-1,1\}$ be the sign of a scalar, let $R_i\in \{<,>\}$ be a binary ordering relation and let $\mathcal{Z}_X=\{|\alpha_{Z{\rightarrow} X}| : Z{\in}\Pa_X\}$ be the set of absolute parental causal effects onto $X$. We define FOL-based rule functions as 
\begin{enumerate}[start=1,label={(ER$\arabic*$})]
    \item{\makebox[3cm][l]{If $R_1 {\neq} R_2$, then:}} $R_1(s(\alpha_{X {\rightarrow} Y}), 0) \land (R_2(y,\mu^Y) \vee R_1(x,\mu^X))$,
    \item{\makebox[3cm][l]{If $R_1 {\neq} R_2$, then:}} $R_1(s(\alpha_{X {\rightarrow} Y}), 0) \land R_1(y, \mu^Y) \land R_2(x, \mu^X)$, and
    \item{\makebox[3cm][l]{If $|\mathcal{Z}_X|>1$, then}} $Y \iff \arg\max_{Z{\in} \mathcal{Z}_X} Z$
\end{enumerate} indicating for each rule ER$i(\cdot)\in\{-1,0,1\}$ how the causal relation $X{\rightarrow} Y$ satisfies that rule.
\end{defin}
One can easily verify that these rules form the building blocks for the generation of causal explanation like in Exp.\ref{fig:overview}. We can now give a simple recursive algorithm that traverses all possible directed paths (or causal chains) to the queried variable checking each of rules \textit{ER}$i$ thus constructing a unique code that maps to a unique answer. We define the Structural Causal Explanation algorithm as:
\begin{defin}[SCE] \label{def:sci}
Like before let $Q_X, \mathcal{M}$ be a valid why-question and some proxy SCM. Further, let $\mD\in\mathbb{R}^{n\times |\mathbf{V}|}$ denote our data set. We define a recursion 
\begin{align}\label{eq:sci-core}
\begin{split}
\mathbf{E}(Q_X, \mathcal{M}, \mD) = (&\textstyle\bigoplus_{Z\in \Pa(X)} \textnormal{\textit{ER}}(Z\rightarrow X), \textstyle\bigoplus_{Z\in \Pa(X)} \mathbf{E}(Q_Z, \mathcal{M}, \mD))
\end{split}
\end{align}
where $\bigoplus_{i=1}^n v_i = (v_1,\dots,v_n)$ denotes concatenation and \textit{ER} checks each rule \textit{ER}$i$ (Def.\ref{def:rules}), and the recursion's base case is being evaluated at the roots of the causal path to $X$, that is, for some $Z{\in}\mathbf{V}$ with a path $Z \rightarrow \dots \rightarrow X$ we have
\begin{equation}
    \mathbf{E}(Q_Z, \mathcal{M}, \mD) = \emptyset.
\end{equation}
We call $\mathbf{E}(Q_X, \mathcal{M}, \mD)$ Structural Causal Explanation of $\mathcal{M}$.
\end{defin}
\textbf{Causal Hans Example revisited (using SCE algorithm).} To return one last time, we clearly see that for $Q_M$ (corresponding to \textbf{Q1}) we can compute using Eq.\ref{eq:sci-core}
\begin{align*}
    \mathbf{E}(Q_M, \mathcal{M}, \mD) &= (\textcolor{blue}{(\textnormal{\textit{ER}}1=-1)}, \textcolor{purple}{\textstyle\bigoplus_{Z\in \{A, F\}} \mathbf{E}(Q_H, \mathcal{M}, \mD)})\\
    &\begin{alignedat}{2}
    &= (\textcolor{blue}{\dots}, (&&(\textcolor{purple}{(\textnormal{\textit{ER}}1=1,\textnormal{\textit{ER}}3=1)}, \textcolor{orange}{\mathbf{E}(Q_A, \mathcal{M}, \mD)}), (\textcolor{purple}{(\textnormal{\textit{ER}}2=1)},\textcolor{orange}{\mathbf{E}(Q_F, \mathcal{M}, \mD)}))), \\
    \end{alignedat} \\
    &= (\textcolor{blue}{\dots}, ((\textcolor{purple}{\dots}, \textcolor{orange}{\emptyset}), (\textcolor{purple}{\dots},\textcolor{orange}{\emptyset}))).
\end{align*}
So the recursion result is $H \rightarrow M: (\textnormal{\textit{ER}}1=-1, \textnormal{\textit{ER}}2=0, \textnormal{\textit{ER}}3=0), A \rightarrow H: (\textnormal{\textit{ER}}1=1, \textnormal{\textit{ER}}2=0, \textnormal{\textit{ER}}3=1), F \rightarrow H: (\textnormal{\textit{ER}}1=0, \textnormal{\textit{ER}}2=1, \textnormal{\textit{ER}}3=0)$. This result \emph{uniquely} identifies the human understandable pronunciation of our causal explanation in Exp.\ref{int:hans}. We provide a detailed explanation on the pronunciation scheme and also intuitive namings for the rules $\textnormal{\textit{ER}}_i$ in the appendix. It is worthwhile noting that the natural language choice of words to express the interpretation is not implied by the form of the SCE e.g., while Hans's mobility is said to be ``bad'', a car's remaining fuel is rather considered to be ``low''.

\subsection{Theoretical Properties of \textit{ER} and SCE}\label{sec:six}
The concepts of why-question, causal scenarios and \textit{ER}$i$ rulest hat we had to develop for the introduction of SCE algorithm, alongside SCE itself, come with several mathematical consequences which we now discuss. All of the subsequent results are simple and can be proven easily, still, their importance needs to be stressed since they make implications about the wide applicability of SCE. 
\begin{prop} \label{prop:rules}
For any causal scenario the rules ER$1$ and ER$2$ will be mutually exclusive.
\end{prop}
\begin{proof}
First, we code the binary ordering relations $<,>$ to represent 0 and 1 respectively. Second, we observe that ER$i\in\{<,>\},i\in\{1,2\}$ always involves the triplet $T=(R(s(\alpha_{X {\rightarrow} Y}), 0), R(y, \mu^Y), R(x, \mu^X))$. Third, let $\mathbb{T} := \{0,1\}^3$ be the set of all such triples as their code words, so $T\in\mathbb{T}$. Looking at the total number of possible scenarios $|\mathbb{T}|=2^3=8$, we easily see that ER$1$ covers codewords $\{010,011,100,101,000,111\}$ and ER$2$ covers the codewords $\{001,110\}$, and together they cover all codewords ER$1$ $\cup$ ER$2=\mathbb{T}$. Since any single scenario $C_{XY}$ is uniquely mapped to a codeword, it will either trigger ER$1$ or ER$2$ but never both.
\end{proof}
\begin{prop}
The SCE recursion always terminates.
\end{prop}
\begin{proof}
The recursion's base case is reached when a root node is reached i.e., a node $i$ with $\Pa_i=\emptyset$. An SCM implies a finite DAG, so root nodes are reached eventually.
\end{proof}
\begin{theorem} \label{thm:gimsci} 
The output of any causal structure learning algorithm can be used to compute SCE. 
\end{theorem}
\begin{proof}
The proof for this theorem is surprisingly simple in that the SCM $\mathcal{M}$ used in the SCE recursion is only required to provide some kind of numerical value $\alpha_{i\rightarrow j}$ for the relation of any variable pair $(i,j)$, that is, a matrix $\mathbf{\mathrm{A}}\in\mathbb{R}^{|\mathbf{V}|\times|\mathbf{V}|}$ which represents a linear SCM or a SCM where each $\alpha_{i\rightarrow j}$ represents a causal effect description. If the matrix $\mathbf{\mathrm{A}}$ is an adjacency matrix living in $[0,1]^{|\mathbf{V}|\times|\mathbf{V}|}$, then we simply have no information about ER3 since all causal effects are assumed to be the same. Since any causal structure learning algoirthm will produce a causal graph represented by a matrix, we have that we can compute SCE.
\end{proof}
The beauty of Thm.\ref{thm:gimsci} can be fully appreciated when being put into the context of practical AI/ML research and application. It tells us that \emph{any} causal graph learner ever invented and that will ever be invented can provide causal explanations on any query of interest consistent with the learned model thus reflecting the learnt. In practice this means that all prominent graph learning algorithms like NT \citep{NEURIPS2018_e347c514}, CGNN \citep{goudet2018learning}, DAG-GNN \citep{yu2019dag} and NCM \citep{ke2019learning} are all explainable\footnote{The DAG learner in NT can be interpreted as a linear SCM but there is no guarantee.}. On a concluding note to this section, we have a remark on SCM that allow for hidden confounder. SCE as presented Def.\ref{def:sci} do not cover hidden confounders and we leave this for future work. However, we can always modify the algorithm to talk about ``unknown reasons'' when giving knowledge on $\mathbf{U}$. An extended discussion on this and also other noteworthy aspects of SCE can be found in the Appendix. 

\section{Empirical Analysis}\label{sec:seven}
\subsection{Failure of Existing Explanation Methods}
In the beginning we said that existing methods such as CXPlain \citep{schwab2019cxplain} would fail to provide truly causal explanations useful for Causal XIL but did not argue or show \emph{how} they would fail. Now after having introduced our solution to the problem in the previous section on SCE, we can compare both and make apparent the shortcomings that we have fixed with SCE. For illustration we make once again use of the Causal Hans example. Fig.\ref{fig:fail} shows the results. We provided CXPlain with the same ground truth SCM as SCE to train the surrogate explanation model. We trained 10 bootstraped neural models using suitable parameters for the masking operation and loss function. What we observe is a distribution over importance scores where all factors are being deemed relevant and ``causal'' to the output (which in this case is the mobility of Hans). Also the highest attribution is given to age, then food habits and finally the lowest to health. This single observation makes apparent two important shortcomings: \textbf{(I)} from the output we do not know which is a direct ($H$) and which are indirect ($A, F$) causes while the ordering of presentation in SCE clearly distincts the former from the latter, and \textbf{(II)} we have no information on the causal effect, that is, we cannot tell in which way a variable with high attribution will affect the predicted variable, for example food habits received a high importance score like age but age will have a detrimental effect on mobility whereas food habits will have a beneficial effect---again, SCE fixes this by discriminating the positive and negative cases. We also ran CXPlain for more queries and observed two further fallacies that we discuss in the Appendix. One the flip side, like SCE, the CXPlain attribution was able to identify the that the effect of aging is stronger that that of dieting.
\begin{figure*}[t!]
    \centering
    \includegraphics[width=1\textwidth]{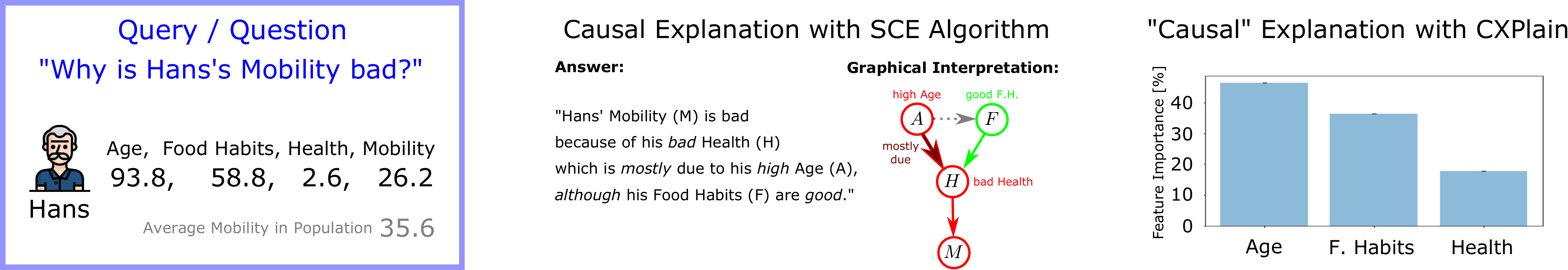}
    \caption{\textbf{Comparison of SCE with CXPlain.} \label{fig:fail}
    Refer to the text in Sec.\ref{sec:seven} for the discussion. Both approaches use the same ground truth SCM, however, only SCE is capable of leveraging necessary causal knowledge. (Best viewed in color.)
    }
\end{figure*}

\subsection{Quality of Learned Explanations} \label{sec:qual} 
The overarching question for this experiment was ``If we apply a causal graph learning method to our data, then what do the generated explanations reveal about the learning method?'' To investigate this question we looked at different data sets including different graph learners and for each combination their respective SCE. We considered several different why-questions for each of the four data sets: data set for the Causal Hans example, weather forecast (W, real world, \citet{mooij2016distinguishing}), mileage (M, synthetic), and recovery (R, real world, \citet{charig1986comparison}). To avoid cluttering in the main text we have moved the relevant tables and figures to the the Appendix where we also provide an extended account. Here we highlight what the SCE when applied to NT \citep{zheng2020learning} would reveal about what was learned. We ran NT with suitable paramaters. In Tab.\ref{tab:gt-vs-nt} of the Appendix we see the computed SCE. We observe expected results on the W and M data sets, whereas differences on the R and H data sets. For R, the difference is only subtle as the model's explanation to the query ``Why did Kurt not Recover?'' is not ``Kurt did not Recover because of his bad Pre-condition, although he got Treatment.'' but ``[...], which were bad although he got Treatment.'' which is on the second recursion in the reasoning process i.e., the treatment countering the state of condition and not affecting the condition itself. This difference becomes apparent in the graphical structure where the arrow from Pre-conditions to Treatment is inverted contrary to expectation. To illustrate one more example using the data set to our Causal Hans example where the difference was more drastic, here the discrepancy revolves around a totally different graph structure e.g.\ the learned model expects a direct cause-effect relation between age and mobility while also wrongly assuming that food habits have a detrimental effect on health. Therefore the answer to the question ``Why is Hans’s Mobility bad?'' suddenly becomes ``Hans’s Mobility, in spite his high Age, is bad mostly because of his bad Health which is bad mostly due to his good Food Habits.'' which sounds very absurd. The ground truth SCM for this data set contains non-linear causal relationships, while NT makes linearity assumptions, which explains the wrongly learned graph structure. The crucial observation, though, which answers our initial question about what SCE can reveal about the learning algorithm is the following: only by looking at the SCE, effectively using it as a graph distance or metric, we were able to tell that the learned model is very different from what we expected. Put differently, it made apparent for the Causal Hans example that by simply adding an extra edge (here between $A,M$, see Tab.\ref{tab:gt-vs-nt}) and flipping the sign of an edge (here between $F,H$) we already get a big difference in what these graphs express/explain.

\subsection{Using Explanations as Regularization during Learning} 
Since SCE contain knowledge on (some) causal relationships underlying the data, we wondered to which extent they might be used for improving the sample efficiency of graph learners. In this experiment the overarching question therefore was ``If we have explanations available provided, by say an expert, then could we use it for improving learning?'' For ease of applicability and consistency with the previous experiment we take NT again and add a simple regularization term to its loss that penalizes inconsistent explanations. We generate 70 random linear SCMs with respective observation distributions. Then we use graph learning to infer 70 more graphs, making 140 graphs in total. For each graph we generate 50 random single-why questions to be answered, resulting in a data set of 7000 explanations. All the details regarding this learning setup, such as for instance how to make make SCE differentiable for it to function as training signal, are being discussed in the Appendix. The graph learning is being performed in a data scarce setting with only 10 data samples per graph. Thus to infer the true causal structure the method ideally needs to perform sample-efficient learning. Fig.\ref{fig:exp2} shows our results. The error distributions over all of the graphs are shown both with and without the SCE regularization. We also highlight the graph estimate upon which most improvement was observed. It can be observed that with the regularization the method can both identify more key structures while significantly reducing the number of false positives. For example many false links that pointed towards node H (like B to H or G to H) were removed while some key structures could now be recovered like the directed edge from node I to node A. While more experiments would be necessary to claim that indeed learning is (significantly) improved through explanations, our naïve learner already provides evidence in favor of that initial hypothesis that sample efficiency is being improved by models that are under pressure to explain what they learned.
\begin{figure*}[t!]
    \centering
    \includegraphics[width=1\textwidth]{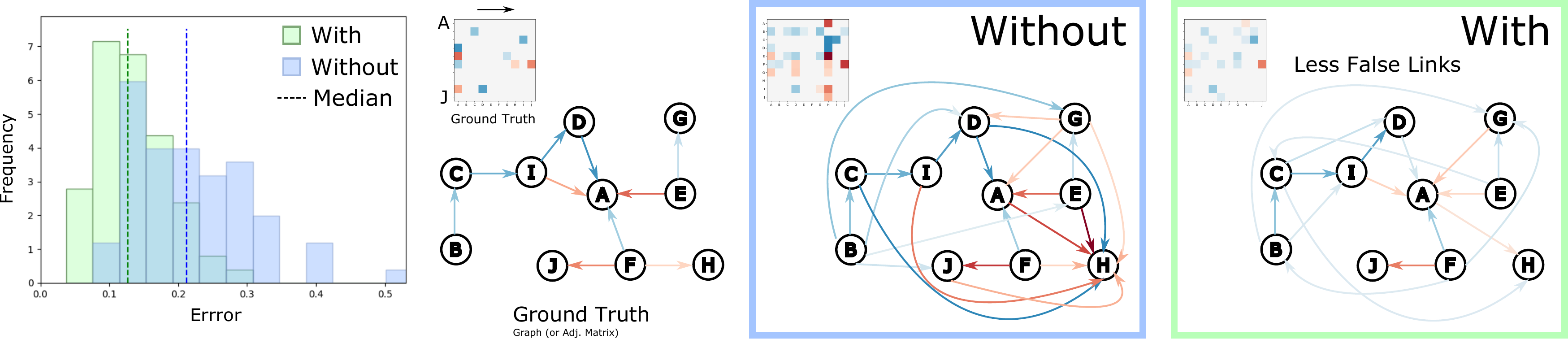}
    \caption{\textbf{Graph Learning Improves with Explanations.} \label{fig:exp2}
    Left: error distributions when performing graph learning with/-out SCE regularization (which is simply an added penalty term for inconsistent explanations), next to is the ground truth graph. Right (boxes): the predicted graphs, showing a decreased number of false positives. (Best viewed in color.)
    }
\end{figure*}

\subsection{Survey with 22 Participants} \label{sec:survey}
Throughout this paper we provided several arguments in advocacy of Causal XIL as the key paradigm of interest for future research and application. We proposed SCE as a solution to the problem of truly causal explanations, therefore, in this final experiment we investigate the overarching question of ``What does SCE explain about the causal intuition that humans have that could provide for Causal XIL?'' We let $N=22$ human subjects judge the qualitative causal structure of four ``daily-life'' examples using a questionnaire specifically designed to provide us with the data necessary for constructing causal graphs representative of what the participants think about the presented concepts. Please refer to the Appendix for the questionnaire and \href{https://anonymous.4open.science/r/Structural-Causal-Explanations-D0E7/Survey-Human-Data-Anonymized.pdf}{[Human Data]} for the anonymized answers that we used for evaluating the survey. Also a prolonged discussion is provided in the Appendix. The first question to answer is, how did we construct the graph estimates from human data? In Fig.\ref{fig:humancem} we show two ways that we considered: ``Mode'' where we can simply look at the different graphs and take the most frequently occurring one as representative of the population or ``Greedy'' where we look at the frequency at which edges are predicted and then simply construct a graph from greedily taking the most probable edge each time. Greedy comes at the cost that the predicted graph is not necessarily in the populaton. With the graphs at hand, we can analyze our initial question. For brevity, we will only highlight some key observations: \textbf{(I)} we observe a systematic approach and thereby non-random approach to edge-/structure-selection by the subjects. Furthermore, there are only a few clusters even with increasing hypothesis space. Both the systematic manner and the tendency to common ground are evidence in support of SCMs and mental models overlapping, \textbf{(II)} we observe that the increase in hypothesis/search space (i.e., more variables) comes with an increase in variance. This variance increase can be argued to be due to the progressive difficulty of inference problems as well as decreased levels of attention and potential fatigue across the duration of the experiment, \textbf{(III)} some subjects implicitly assume a notion of time for example that there must be a cyclic relationship between, say, treatment and recovery where the subject likely thought in terms of `increasing treatment increases the speed of recovery \emph{which subsequently} feeds back into a decrease of treatment (since the individual is better off than before)', and \textbf{(IV)} answering our initial question of looking at the SCE we can observe that they lie much closer to the expected explanation opposed to what we have observed in our second experiment when looking at existing methods for graph learning (Sec.\ref{sec:qual}), suggesting that the human subjects are better at judging causal relationships for the examples we considered.
\begin{figure}[t!]
\centering
\includegraphics[width=.85\textwidth]{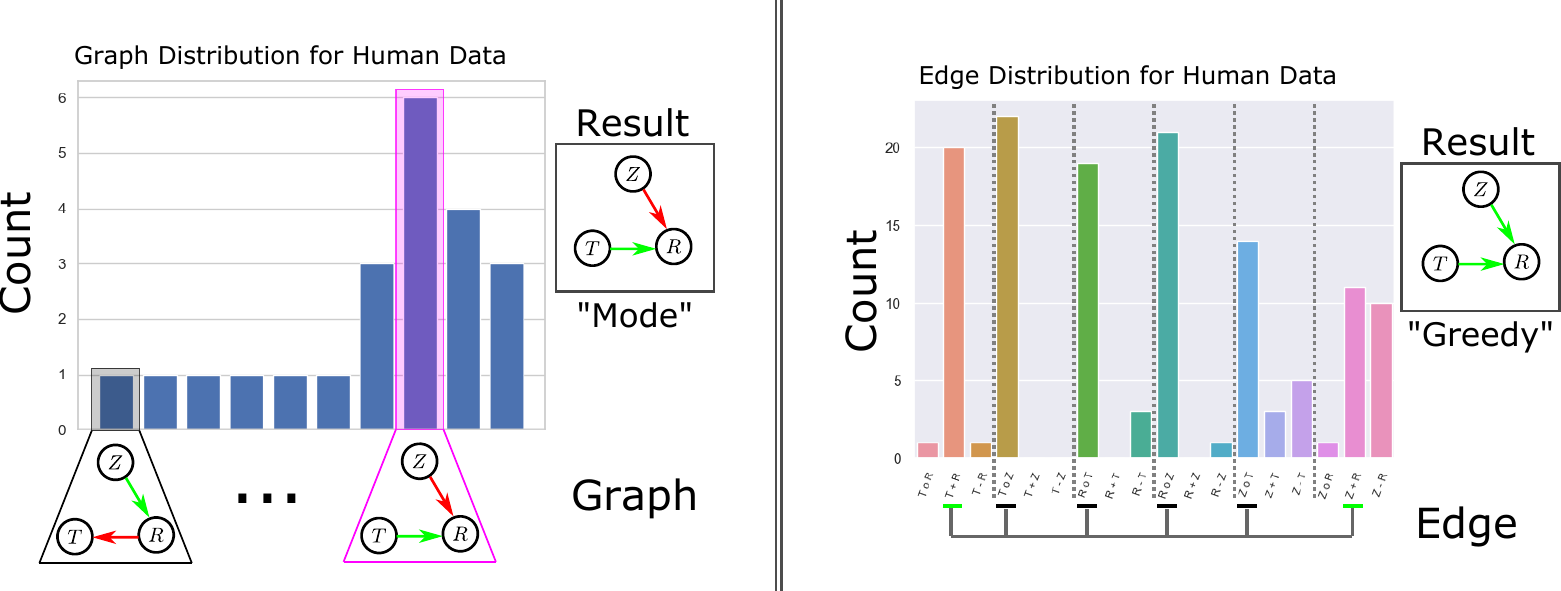}
\captionof{figure}{\textbf{Measuring Agreement between Subjects.} \label{fig:humancem}
Refer to Sec.\ref{sec:survey} for details. Left, the graph estimate is the mode of the distribution. Right, greedily pick each edge. (Best viewed in color.)
}
\end{figure}

\section{Concluding Remarks}
Our work made clear how existing explanation methods, even when proclaimed causal, leave open the problem of truly causal explanations. We further argued that truly causal explanations are need of the hour since we ultimately seek a causal variant to XIL as it can improve both model performance out-of-distribution and increase user trust. To this end, we derived from first principles using our Causal Hans example a computable explanation algorithm grounded in SCM. We proved the wide applicability of our SCE algorithm and then went on to corroborate our results with several experiments including a naïve approach to learning with explanations and a survey with 22 participants to analye the human component in XIL. For future work, there are several interesting routes to take such as proving that the proposed ER$i$ rules are complete in that there is no other missing rules in the formalism or also reasoning about ``when you don't know'' to capture for example uncertainty about the inputs like the current belief about the SCM.

\textbf{Societal / Ethical Implications.} While this work discussed for the first time a XIL approach with truly causal explanations and therefore is to be considered still in its infancy, deploying a causal XIL process with SCE could potentially help in many applications to improve model performance and increase user trust by making transparent the ``understanding'' of the model in natural language. 

\subsubsection*{Acknowledgments}
The authors acknowledge the support of the German Science Foundation (DFG) project “Causality, Argumentation, and Machine Learning” (CAML2, KE 1686/3-2) of the SPP 1999 “Robust Argumentation Machines” (RATIO). This work was supported by the ICT-48 Network of AI Research Excellence Center “TAILOR” (EU Horizon 2020, GA No 952215), the Nexplore Collaboration Lab “AI in Construction” (AICO) and by the Federal Ministry of Education and Research (BMBF; project “PlexPlain”, FKZ 01IS19081). It benefited from the Hessian research priority programme LOEWE within the project WhiteBox \& the HMWK cluster project “The Third Wave of AI” (3AI).

\vspace{1cm}
\bibliography{SCIT}
\bibliographystyle{iclr2023_conference}


\clearpage
\appendix

\section{Appendix for ``Causal Explanations of Structural Causal Models''}
We make use of this appendix following the main paper to provide extended discussions and details.

\subsection{Learning DAGs and Causal Graphs}
Induction of inter-variable relationships based on available data lies at the core of most scientific endeavour \citep{penn2007causal}. The sub-class of relation structures of DAGs plays a central role due to its representational role in causality \citep{pearl2009causality, peters2017elements}. Unfortunately, due to the combinatoric nature of the problem setting, learning DAGs from data is recognized to be an NP-hard problem \citep{chickering2004large}. In their seminal work, \citet{NEURIPS2018_e347c514} were able to re-formulate the traditional view into a continuous shape such that any non-convex optimization can be applied for the graph estimation problem. The authors propose the general formulation,
$    \min_{\mW \in \mathbb{R}^{d\times d}} \quad f(\mW) \quad \text{subject to} \quad h(\mW)=0$,
where $f$ is a data-based score, e.g.\ in \cite{NEURIPS2018_e347c514} a regularized least-squares loss is applied assuming a sparse linear model (possibly SCM) i.e., $f(\mW)=||\mX - \mX \mW||^2_F+||\mW||_1$, and $h$ is a smooth function with a kernel (or null space) that only contains acyclic graphs, $h(\mW)=0 {\iff} \mW \ \text{is acyclic}$. Different variations of the same continuous counting mechanism using this acyclicity constraint have been proposed, e.g., \citet{zheng2020learning} proposed $h(\mW)=\tr(e^{\mW \circ \mW})-d$ while \citet{yu2019dag} proposed $h(\mW)=\tr[(\mI + \mW \circ \mW)^m] - m$, unfortunately, both suffer from cubic runtime-scalability in the number of graph nodes, $O(d^3)$. 
While the aforementioned works have focussed on data originating from (non-linear transformation) of linear SCM, there exists yet another sub-class of DAG-learning methodologies that focuses on more general causal inference. \citet{ke2019learning} made use of data from the first two levels of Pearl's Causal Hierarchy (PCH; \citet{pearl2009causality,}), namely observational and interventional, to update their graph estimate while using masked neural networks to mimic the structural equations. \citet{brouillard2020differentiable} follows the same idea of leveraging causal information, e.g. interventional data, for overcoming identifiability issues while staying close to the continuous optimization formalism introduced in \citep{NEURIPS2018_e347c514}.

\subsection{Fail of Existing Methods, Example Showcase with CXPlain} \label{sec:failapp}
In the main text we illustrated the key aspects of how (even proclaimed causal) explanation methods like CXPlain \citep{schwab2019cxplain} fail to do justice for a causal XIL framework. Here we highlight two more interesting shortcomings. Fig.\ref{fig:fail2} shows the same setup as from the main text but for different queries (i.e., patients other than Hans). We make the following observations: (I) the attributions are deterministic. This might first be considered a feature, however, the causal mechanism of an SCM are only deterministic up to a realization of the exogeneous terms. Therefore, we can have the exact same patient record for \emph{different patients}. This cannot be captured by these previous attribution methods. (II) when querying for random individuals we actually observe inconsistencies between the attributions which is weird since the patient records are being generated by the same causal mechanisms. For example in the Hans case we had age, nutrition and then health ordered from highest to lowest attribution. For a rather similar patient we observe that nutrition and age swap in importance. For yet another patient we observe that suddenly age and nutrition, which previously played the most important role, are not important anymore. The shortcomings (I,II) corroborate the shortcomings previously mentioned in the main text, rendering the need for truly causal explanations as presented with SCE more so important.

\subsection{Hypothesis: Mental Model of Humans representable as SCM?} \label{sec:three}
It has been argued that at the core of a human mental model (abbreviated MM in the following) the illustration of one's thought process (regarding the understanding of world dynamics) is to be found \citep{simon1961modeling,nersessian1992theoretician,chakraborti2017plan}. The difficulty of said thought process illustration is partly due to circular and abstract terms like explanation and interpretations for which we do not provide an explicit definition as this is up to philosophical debates and ideally we keep the idea more general than what has been done previously in explainable AI/ML where ``explanation equals pixel attributions'' in many cases. Assuming the world dynamics to be governed by causality we observe that humans are capable of modelling both causal relationships between endogenous variables and additionally information on the strength of said relationship. Put differently, MM model a causal graph and corresponding causal effects akin to the formal notions from the previous section. Consider the following real world example:
\textbox{\textbf{MM Example.} \textit{At any given time a human has a state of overall health (relating to fat-muscle ratio, allergies and diseases, etc.) and mobility (relating to the general freedom and flexibility of movement, e.g., a gymnast is more mobile than the average person). Now, the MM allows inferring (1) that mobility is being (partially) caused by something else (for instance health, e.g., being overweight decreases one's mobility) and (2) that different events can have different ``strength'' e.g., that an average car accident causes more harm to the individual's mobility than an average workout session causes good.}} 
A natural candidate for capturing the two properties from the MM example formally are SCM, thereby we hypothesize the following:
\begin{hypothesis}[\textbf{MM Conversion, short MMC}] \label{hypothesis:cmmc}
The parts of the MM that are being used for encoding the causal relationships of the variables of interest can be formally captured by a corresponding SCM, in short this ``equivalence'' can be denoted as MM $\equiv$ SCM.
\end{hypothesis}
While the MMC hypothesis leaves room for notions not captured by mathematical rigor, it suggests an equivalence to SCM regarding the causal aspects. The MM example has motivated the MMC hypothesis which itself suggests \emph{a justification of using SCM in the first place}.

\begin{figure*}[t]
    \centering
    \includegraphics[width=1\textwidth]{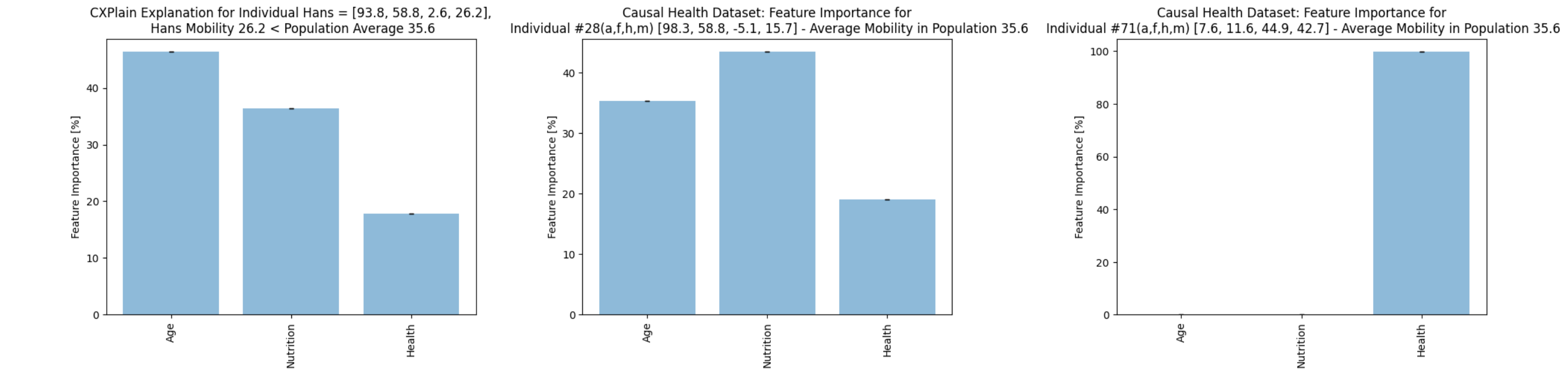}
    \caption{\textbf{Further Examples of CXPlain Shortcomings.} \label{fig:fail2}
    Refer to Sec.\ref{sec:failapp} for details.
    }
\end{figure*}

\subsection{Implications of MM $\equiv$ SCM} \label{sec:four}
If we accept that MM $\equiv$ SCM, then we can use SCMs as an adequate proxy to the MM. Furthermore, any useful property of SCM implies corresponding aspects back in the MM. We immediately observe one such key property of SCM namely comparability. That is, if one is given say two different SCMs that are defined over the same endogenous and exogenous variables (so only differing in the actual parameterizations) then one can compare said SCMs i.e., there exists a notion of distance. For the linear case, we can easily prove this by construcing an example metric space.
\begin{defin}\label{def:distance}
We define a function
    $d(\mathcal{M}_1, \mathcal{M}_2) = \textstyle\sum_{i\neq j} |\mathcal{M}_1(i, j)-\mathcal{M}_2(i, j)| + q(P_1,P_2)$
where $q$ is the square-root of the Jensen-Shannon Divergence (JSD), $\mathcal{M}_k=\inner{\mathbf{U}_i,\mathbf{V}_i,\mathcal{F}_i,P_i(\mathbf{U}_i)}$ for $k\in\{1,2\}$ such that $\mathbf{V}_1{=}\mathbf{V}_2$, $\mathbf{U}_1{=}\mathbf{U}_2$, $\mathcal{F}_k$ define linear functions in $\mathbb{R}$, and in slight abuse of notation $\mathcal{M}_k(i, j)$ is the causal effect $\alpha_{i\rightarrow j}$.
\end{defin}
\begin{prop} \label{prop:metric}
Let $d$ be as in Def.\ref{def:distance} and let $\mathbb{M}$ denote the set of all linear SCM defined over the same exogenous and endogenous variables, $\mathbf{U},\mathbf{V}$. Then $(\mathbb{M},d)$ is a metric space.
\end{prop}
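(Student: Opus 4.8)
The plan is to verify the four metric axioms (non-negativity, symmetry, identity of indiscernibles, and the triangle inequality) by splitting $d$ into its two summands and treating each separately. Write $d = d_{\mathrm{W}} + q$, where $d_{\mathrm{W}}(\mathcal{M}_1,\mathcal{M}_2) = \sum_{i\neq j}|\mathcal{M}_1(i,j)-\mathcal{M}_2(i,j)|$ collects the dependency-term discrepancies and $q(P_1,P_2) = \sqrt{\JSD(P_1,P_2)}$ is the square-root Jensen--Shannon term. My first move is to argue that each summand is a pseudometric on $\mathbb{M}$; since a finite sum of pseudometrics is again a pseudometric, this already yields non-negativity, symmetry, and the triangle inequality for $d$, and the only remaining work is to upgrade the pseudometric to a genuine metric via identity of indiscernibles.

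For the first term, I would observe that because every $\mathcal{M}\in\mathbb{M}$ is linear over the fixed variable sets $\mathbf{U},\mathbf{V}$, it is completely determined by its matrix of dependency terms $(\mathcal{M}(i,j))_{i\neq j}$ together with its noise law $P(\mathbf{U})$. The map sending $\mathcal{M}$ to the vector $(\mathcal{M}(i,j))_{i\neq j}\in\mathbb{R}^{|\mathbf{V}|(|\mathbf{V}|-1)}$ is well defined, and $d_{\mathrm{W}}$ is exactly the pullback of the $L^1$ norm along this map. Non-negativity, symmetry, and the triangle inequality then follow termwise from the corresponding properties of $|\cdot|$ on $\mathbb{R}$ (apply $|a-c|\le |a-b|+|b-c|$ to each index pair and sum over $i\neq j$). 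This shows $d_{\mathrm{W}}$ is a pseudometric, and $d_{\mathrm{W}}(\mathcal{M}_1,\mathcal{M}_2)=0$ forces all dependency terms to coincide.

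The second term is where the real content sits. I would invoke the known fact that the square root of the Jensen--Shannon divergence is a proper metric on the space of probability distributions (Endres \& Schindelin); in particular $q$ is symmetric, non-negative, satisfies the triangle inequality, and vanishes precisely when $P_1=P_2$. The triangle inequality for $q$ is the one genuinely non-elementary ingredient---unlike $d_{\mathrm{W}}$ it does not reduce to a one-line estimate---so I expect citing (or, were self-containment required, reproving) the metricity of $\sqrt{\JSD}$ to be the main obstacle. With this in hand, $q$ is a pseudometric on $\mathbb{M}$ when viewed as depending only on the noise laws, and $q(P_1,P_2)=0$ is equivalent to $P_1=P_2$.

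Finally I would assemble the pieces. Summing the two pseudometrics gives that $d$ is non-negative, symmetric, and satisfies the triangle inequality. For identity of indiscernibles, suppose $d(\mathcal{M}_1,\mathcal{M}_2)=0$; since both summands are non-negative, each must vanish, so $d_{\mathrm{W}}=0$ and $q=0$. The former gives agreement of all dependency terms and the latter gives $P_1=P_2$, and because a linear SCM over the fixed $\mathbf{U},\mathbf{V}$ is determined by exactly these two data, we conclude $\mathcal{M}_1=\mathcal{M}_2$; the converse direction is immediate. Hence $(\mathbb{M},d)$ satisfies all metric axioms.
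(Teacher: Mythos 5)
Your proof is correct and follows essentially the same route as the paper's: decompose $d$ into the dependency-term sum and the Jensen--Shannon term, invoke the metricity of the absolute difference on $\mathbb{R}$ and of $\sqrt{\JSD}$ (the Jensen--Shannon metric), and combine by summation. You are in fact more careful than the paper on one point---each summand alone is only a pseudometric on $\mathbb{M}$, so identity of indiscernibles needs your added observation that a linear SCM over fixed $\mathbf{U},\mathbf{V}$ is determined by its dependency terms together with $P(\mathbf{U})$, a gap the paper's one-line ``metrics are closed under summation'' glosses over.
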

\begin{proof}
The absolute difference on the real numbers is a metric (i.e., positive-definiteness, symmetry, and triangle-inequality hold) therefore holding for the ``dependency'' terms from $\mathcal{F}$. Furthermore, $q$ was chosen as the Jensen-Shannon-Metric. Finally, metrics are closed under summation.
\end{proof}
Prop.\ref{prop:metric} is just one example of what might be considered a sensible metric space for a subset of all SCMs. What it does is compare each of the linear coefficients for any causally related tuple of variables, aggregating the sum, and further adding a divergence term between the defined distributions over the exogenous variables. This comparability and the visual intuition behind MMC are illustrated in Fig.\ref{fig:mental}. We now state our \emph{first key observation} following Hyp.\ref{hypothesis:cmmc} and Prop.\ref{prop:metric}: the existence of a ``true'' SCM is in fact justified i.e., there \emph{exists} an underlying data generating process for any data and the MM of any person might or might not coincide with that SCM.
\begin{figure}[t]
    \centering
    \includegraphics[width=.9\textwidth]{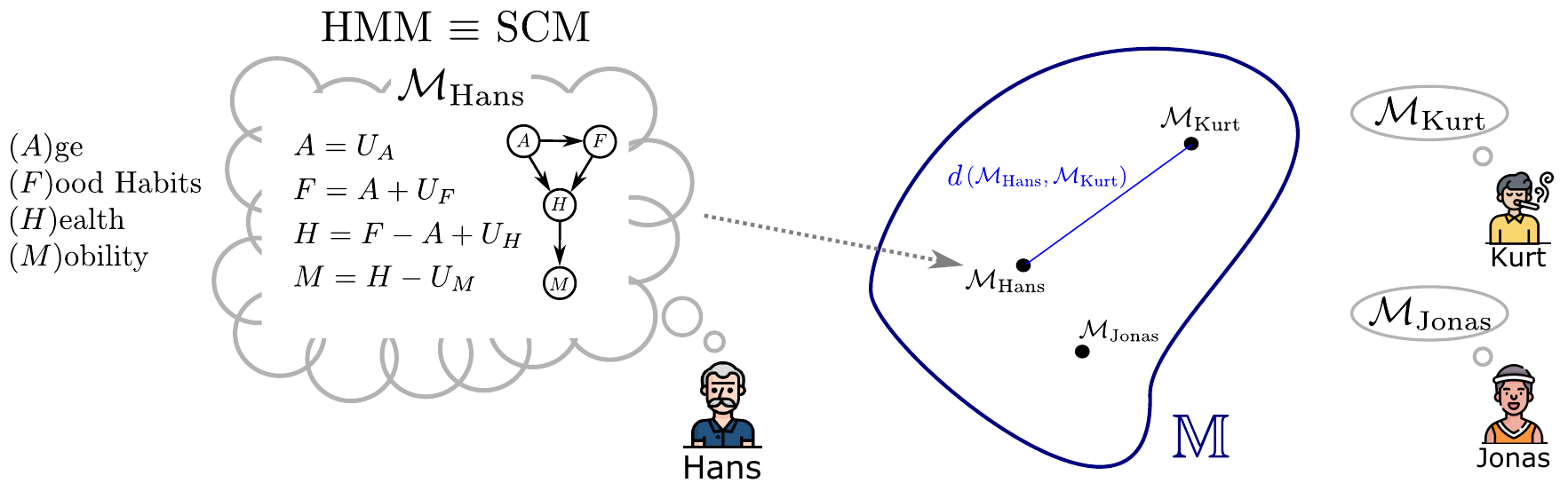}
    \caption{\textbf{MMC Hypothesis and Linear SCM Metric Space.} \label{fig:mental}
    Left: Accepting Hyp.\ref{hypothesis:cmmc} means that the MM of Hans is an SCM. Right: Different linear SCM (from different individuals) can be compared, an example metric space for $(\mathbb{M},d)$ is given by Prop.\ref{prop:metric}. (Best viewed in color.)
    }
\end{figure} 

On another note, consider the fact that while the ``true'' SCM represents the concept of objectiveness, oppositely, the MMs are of subjective nature (that is, every human has their own subjective life experience). Coming back to MM $\equiv$ SCM, we see that Prop.\ref{prop:metric} further implies that MMs are also capable of dis-/agreeing with each other. With this at hand, we now state our \emph{second key observation}: in most practical cases having access to many SCM-encodings of subjective MMs can ultimately lead in their overlap-agreement to (parts of) the objective ``true'' SCM. There is certainly no guarantee since all available MM-SCM samples can in fact be wrong, however, note the emphasis on \emph{in most practical cases}---therefore, identifying this overlap in MM (or SCM) for a specific problem is highly valuable for AI/ML research.

Our final, \emph{third key observation} is concerned with explanations. Existing literature views explanations as \emph{derivable} from MMs and thus implicitly containing some information on the MM \citep{chakraborti2017plan} and since MM $\equiv$ SCM, we argue that there must exist an equivalent of the human notion of explanation \emph{within SCM}. This justifies our further investigation on SCM-based explanations, which eventually leads to the formalism of SCE. The benefits of an approach using explanations derived from SCM are two-fold (1) that by construction they are human understandable allowing for explainable ML in which models can reason about the learnt and (2) that the models themselves become better, as they need to account for consistency in explanations, which is beneficial to any downstream-task.

\subsection{Elaboration on SCE Properties}
The three basic logic rules (Prop.\ref{prop:rules}) dictate how the SCE (\ref{def:sci}) will look like for some causal estimate of the system and any given query and data. Having the actual relation $R$ as a return argument of each of the rules allows for a fine-grained explanation. In a nutshell, it allows to extend a statement ``Y because of X'' to a more detailed one like ``Y because of X being low''. The general pronunciation scheme for the the three rules (excitation, inhibition, and preference), that allows for a human-understandable natural language version of the SCE, are as follows:
\begin{table}[!h]
    \centering
    \begin{tabular}{cll}
        \textit{ER}$1$ & Excitation & ``Y because of X [being low/high]''  \\
        \textit{ER}$2$ & Inhibition & ``Y although X [is low/high]'' \\
        \textit{ER}$3$ & Preference & ``mostly'' + \textit{ER}$1$ or \textit{ER}$2$ pronunciation
    \end{tabular}
    \caption{\textbf{Pronunciation Scheme.} Right shows the natural language reading of a rule's activation.}
    \label{tab:pronuncs}
\end{table}
The pronunciation of the details to the relation e.g. ``low''/``high'' is context-dependent in that these words might need to replaced with adequate/corresponding words suitable for the context i.e., ``the Matterhorn is cold because of the high altitude'' (``cold Temperature because of Altitude being high'') is fine while ``the remaining car fuel is low because of the bad driving style'' (``low Fuel because of Driving Style being bad'') requires the context-adaptation (``low'' is converted to ``bad''). Another noteworthy detail to the SCE properties is the property of \textit{non-repeating causes within explanations} which reduces redundancy. Consider for instance our lead example on Hans's mobility (Exp.\ref{int:hans} or Fig.\ref{fig:overview}), the SCM suggests that $F$ can also be explained by $A$, since $A\rightarrow F$. However, the corresponding SCE does not give this reason because of the aforementioned property which ensures that redundancy is being avoided. I.e., in the explanation step before we actually explain $H$ using both $A$ and $F$, since $\{A,F\}\rightarrow H$, therefore, making it irrelevant for the question to explain the relation between the parents ($A,F$). While we provided intuition on the derivation of these basic FOL rules alongside the ``Causal Hans'' example, we now additionally motivate the namings ``excitation'', ``inhibition'' and ``preference'' i.e., what inspired us to name them in such way. We took inspiration from \emph{neuroscience}, where the former two terms relate to the way neurons interface with each other using their synaptic-dendric connections. The last term is a term to propose ``relativity'' and thus a preference for one cause over the other. All terms thereby adequately describe any causal path in qualitative terms while also providing an almost synonym-quality to the pronunciations (Tab.\ref{tab:pronuncs}).

\subsection{Details for Experiment 2}
We select NT \citep{NEURIPS2018_e347c514} as a representative data-driven graph learner for the illustration in Tab.\ref{tab:gt-vs-nt} which considers the data sets and why questions illustrated in the appendix Fig.\ref{fig:why-ext} i.e., weather forecast (W, \cite{mooij2016distinguishing}), health (H), mileage (M), and recovery (R, \cite{charig1986comparison}). The SCE generated using the learned causal semantics are identical for the DW and M data sets, while differing only subtle for R and drastically for CH data sets. The former discrepancy occurs on the second-level of reasoning i.e., the right top-level explaining answer is given to the question (i.e., ``Kurt did not recover because of the problematic pre-conditions'') but was contrasted wrongly (i.e., the treatment countering the state of condition and not affecting the condition). The latter discrepancy revolves around a totally different structure e.g.\ the learned model expects a direct cause-effect relation between age and mobility while also wrongly assuming that food habits have a detrimental effect on health. An explanation in the case of NOTEARS is clearly the violation of the linearity assumption for the CH data set generating SCM.

While in Thm.\ref{thm:gimsci} we prove that graph learner are generally explainable in the sense of SCE, for empirical illustration we also provide more examples of such graph learner-based SCE, as we did with our lead examples for NT, in this case additionally for CGNN \citep{goudet2018learning} and DAG-GNN \citep{yu2019dag}. Tables \ref{tab:gt-vs-nt} and \ref{tab:app-remainder-SCI} shows an application to NT with graph visualizations and of all methods to a superset of questions (that is, same and more) as the data used for NT. It is crucial to note that the presented results have \textit{not} been hyperparameter-optimized (HO). Take for example CGNN, where candidate selection is exhaustive (brute force, and thus super-exponential in the number of nodes) and the model selection heavily relies on the neural approximation, thereby, HO is likely to be important. In a nutshell, the motivation behind Tab.\ref{tab:app-remainder-SCI} is to present support for our theoretical proof on SCE-interpretability of graph learner i.e., we also give empirical proof for several methods in practice (opposed to pure theory). To assess the quality of the SCE, it is important to note the assumptions made by the original method. E.g., NT and DAG-GNN assume linear SCM. Thereby, we have \emph{no guarantees} for running such a method in a non-linear data domain (which we do with the data sets DW and CHD). \emph{Interestingly, these assumptions can in fact be exposed by SCE.} Consider the DW data set (Tab.\ref{tab:app-remainder-SCI}, first example), theory suggests that a linear model with Gaussian noise will exist in both directions $X\rightarrow Y$ and $Y\rightarrow X$, thus being non-identifiable \citep{peters2017elements}. Methods like NT and DAG-GNN therefore pose the assumption that the given data comes, in this case, from a linear model with Gaussian noise i.e., the identifiability problem is being circumvented altogether. This is also the reason why different random seeds can lead to both modellings ($A\rightarrow T$ and $T\rightarrow A$) for the DW data set (see in Tab.\ref{tab:app-remainder-SCI} how the SCE flips for $\mathcal{M}_3$=DAG-GNN for the two opposing DW queries). Another important note is that the uninformed SCEs ``No causal explanation ...'' occur when the method's SCM estimate does not contain a causal path to the variable that is being queried by the why question i.e., the SCM will actually contain a non-trivial estimate of the underlying causal structure, even though the SCE returns a trivial/empty explanation since the variable of interest can not be reached within the estimate's structure with a directed path (i.e., the base case in Def.\ref{def:sci} is trivially triggered). In fact, these negative ``no answer''-type of cases are important since the model need also be able to know when there is nothing to be known. For this case, we also pose why questions to which the ground truth is already a ``no answer'' explanation since there is no causal connection to the variable being queried by the why-question. The empirics in Tab.\ref{tab:app-remainder-SCI} suggest, as theoretically proven (Thm.\ref{thm:gimsci}), that the graph learner are explainable and also that all 3 rules (excitation, inhibition and preference) are being used for the graph learner-based SCE. As a positive example, consider example \#3 for the CH data set where $\mathcal{M}_1$ captures the complex explanation correctly up to preference and falsely assuming that food habits ($F$) have a negative causal effect on health ($H$). A more interesting example (\#8 for the R data set) shows that the main reason being bad pre-conditions ($Z$) is being captured but the model falsely assumes that those are because of the received treatment ($T$). To consider a negative example have a look at example \#4 again for CH where the actual answer is a ``no causal explanation'' since age ($A$) is a root node. However, $\mathcal{M}_3$ claims that the age is high because of the food habits and mobility ($M$), then again because of health. While the statement is wrong and also feels exaggerated, inspecting closely one can detect the correct existence of the causal edge between mobility and health ($H\rightarrow M$). I.e., the model interprets wrongly, but its causal model is still partially valid.
\begin{figure*}[t!]
\centering
\includegraphics[width=.99\textwidth]{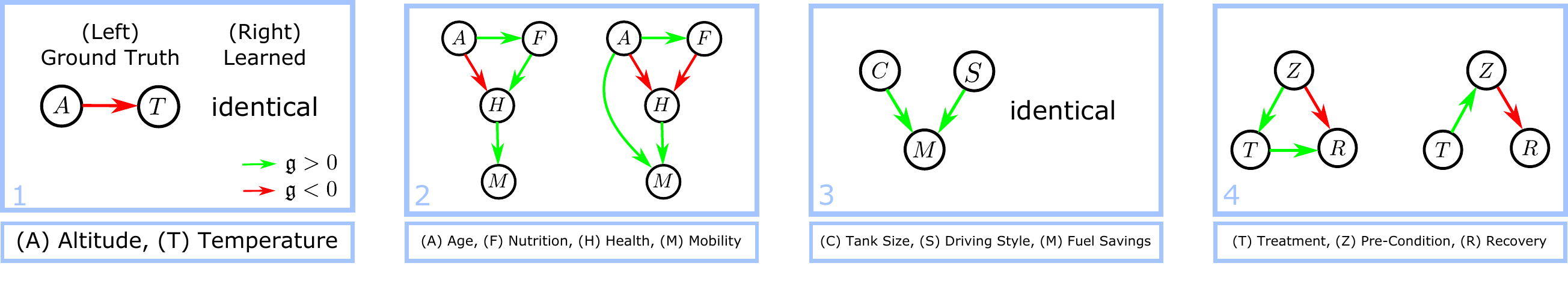}
\centering\resizebox{\textwidth}{!}{
\begin{tabular}{l|l}
\toprule
  1 & \emph{``Why is the Temperature at the Matterhorn low?''} \hspace{5.1cm} (\emph{Question}) \\ 
  & ``The Temperature at the Matterhorn is low because of the high Altitude.'' \hspace{1.5cm} (Ground Truth) \\ 
  & {\color{blue} \textbf{``The Temperature at the Matterhorn is low because of the high Altitude.''}} \hspace{1.2cm} {\color{blue} (\textbf{Learned})} \\
  \midrule
  2 & \emph{``Why is Hans's Mobility bad?''} \\
  & ``Hans's Mobility is bad because of his bad Health which is mostly due to his high Age, although his Food Habits are good.'' \\ 
  & {\color{blue} \textbf{``Hans's Mobility, in spite his high Age, is bad mostly because of his bad Health which is bad mostly due to his good Food Habits.''}} \\
  \midrule
  3 & \emph{``Why is your personal car's left Mileage low?''} \\
  & ``Your left Mileage is low because of your small Car and your bad Driving Style.'' \\ 
  & {\color{blue} \textbf{``Your left Mileage is low because of your small Car and your bad Driving Style.''}} \\
  \midrule
  4 & \emph{``Why did Kurt not Recover?''} \\
  & ``Kurt did not Recover because of his bad Pre-condition, although he got Treatment.'' \\ 
  & {\color{blue} \textbf{``Kurt did not Recover because of his bad Pre-Condition, which were bad although he got Treatment.''}} \\
  \bottomrule
\end{tabular}
}
\captionof{table}{\textbf{Quality of Learned Interpretations.} We chose the simple, popular NT from \cite{NEURIPS2018_e347c514} as our graph learner for generating the SCE. Subtle differences between explanations exist e.g., the explanation 4 is right on the top-level but for the wrong reasons, that is $T\rightarrow Z$ instead of $T\rightarrow R$. Variable letters are capitalized. (Best viewed in color.)}
\label{tab:gt-vs-nt}
\end{figure*} 

\subsection{Details for Experiment 3}
We again make use of the NT as representative of graph learners for the subsequent experiment in which we investigate our conjecture from the final paragraph in Sec.\ref{sec:six} i.e., whether SCEs themselves can be used as a supervision signal to improve the quality of the learned graph. To circumvent the non-differentiable nature of our recursive formulation of SCE (consider checking the formalism again in Def.\ref{def:sci}) we train a neural network on a set of legal SCE to mimic the interpreter while being fully differentiable. Following \citet{NEURIPS2018_e347c514}, we generate 70 random linear SCMs following Erdos–Renyi structures. We use graph induction to infer 70 more graphs, making 140 in total. For each graph we generate 50 random single-why questions to be answered, resulting in a data set of 7000 explanations. We extend the NT loss composition with this neural approximation using a SCE regularization penalty (to penalize SCE inconsistent graph estimates) and perform graph induction once with and once without the regularization (where between 1 and 50 explanations are being observed). The graph induction is being performed in a data-scarce setting with only 10 data samples per graph induction. Thus to infer the true causal structure the method ideally needs to perform sample-efficient. Main paper Fig.\ref{fig:exp2} shows our empirical results on the error distributions for all the graphs while presenting the qualitative difference in the estimated graphs for the most significantly improved example. It can be observed that with the regularization the induction method can both identify more key structures while significantly reducing the number of false links, thereby appearing to be overall more sample-efficient. An explanation would be that, as conjectured, the explanations contain valuable information about the underlying SCM if the explanations themselves were generated by a similar SCM, thereby striking structures that would lead to contradicting explanations.

\subsection{Details for Experiment 4}
We instructed $N=22$ participants to answer our questionnaire (see appendix Fig.\ref{fig:exp3setup}). The questionnaire asked the following questions: \emph{``Given a pair of variables, does a causal relationship exist (existence)? If yes, then which is the cause and which is the effect (direction)? If there are multiple causes for a single variable, then how impactful is each of the causes (preference)?''} All of these questions, alongside their responses, are of \emph{qualitative and subjective} nature. It is important to note that the participants \emph{do not} perform the actual induction from specific, provided data like the algorithms do i.e., the human subjects are not given the variable names nor concrete data points that would allow them to find the rules for the specific data sets. Instead, they were only given the variable names/depictions, thereby having to induct from personal experience/understanding essentially. This approach to human induction is related to the experimental setups in \citep{griffiths2006optimal,hattori2016probabilistic}. 

The motivating lead research questions we intended to answer, and in fact do answer successfully with this experiment, are: What are SCM that (some) human could model? How does overlap for human-based SCM occur? How do subsequent SCE (Def.\ref{def:sci}) between humans and algorithms differ? In a nutshell, we wanted to investigate the similarity of SCMs between subjects in addition to the similarity between subjects- and algorithm-based SCEs.

A caveat regarding the analysis and explanation of human judgements is that sample bias may distort conclusions. Sample bias has long been identified within the behavioral and social sciences as limiting the generalization of results obtained in a specific sample to the population. A common methodological fix to counteract such biases is to increase the sample size, see \citep{daniel2017thinking} for a recent application and discussion. Certainly, the observed sample will affect the way the difference (to e.g.\ algorithm-based SCE) turns out to be, but then again our research question is \textit{not} concerned with all possible human explanations, but any. Furthermore, we chose data sets that model very general examples and thus offer accessibility to the general population since no single person might be an expert. Ultimately, this way of designing our experiment, while not removing sample bias of course, renders the bias's qualitative effect onto our subsequent investigation negligible.

In the following we provide a discussion of several interesting and important insights discovered through the human user study. Nonetheless, it is important to note that our results like most modern day interpretations of human behavior are of conjectural nature -- sensible, educated guesses essentially. During this discussion, we will point to specific aspects of the descriptive statistics displayed in appendix Fig.\ref{fig:exp3remainder}. The actual human data is also being appended for the sake of completion (click on the following link to access the anonymized human data: \url{https://anonymous.4open.science/r/Structural-Causal-Explanations-D0E7/Survey-Human-Data-Anonymized.pdf}). The questionnaire contains four examples with two, three, three, and four variables (or concepts) respectively that are being visually depicted in addition to a concise textual description. We randomized the textual description of up to three variables across all examples for any randomly selected participant. Doing so, we allow for the randomized concept to reverse causal influence directions, thus, diminishing the bias of chance-selecting said causal direction -- in a nutshell, this randomization scheme helps us in controlling for explanation variance (or leeway) of the subjects. Nonetheless, we still observed that for any variable pair $(X,Y)$ the meanings of $X$ and $Y$ themselves could be interpreted differently, which ultimately resulted in False Negatives regarding agreement i.e., people will disagree technically although they actually agree. To give a concrete example, consider the following: pre-condition in Example 2 can be interpreted as ``the length of the medical history of a patient'' (negative; increasing implies lower chance of recovery) opposed to ``the state of well-being of a patient'' (positive; increasing implies higher chance of recovery), thereby some subjects might choose $Z_1\rightarrow R$ while others will choose $Z_2\leftarrow R$ where $Z_i$ are the different explanations of the ``pre-condition'' concept (and $R$ denotes recovery), yet all subjects agree on an existing relation between the two variables: $Z_i \leftrightarrow R$. Also, some variables/concepts were more stable in their explanation variance. To give yet another specific example, altitude and temperature in Example 1 (appendix Fig.\ref{fig:exp3setup}) are stable concepts while the aforementioned pre-condition in Example 2 is unstable (due to its explanation variance/leeway). More importantly these different explanations due to the ambiguity inherent in language become visible within the statistics. To stay inline with the previous example, consider the medical example within appendix Fig.\ref{fig:exp3remainder} (second row, middle) and specifically consider the edges $T\rightarrow R$ and $Z\rightarrow R$. For the former relation the agreement between subjects is evident i.e., the majority of human subjects will select this edge. For the latter relation, we clearly see the two previously discussed explanations that subjects employ during edge decision. I.e., for some subjects the edge between $Z$ and $R$ is positive and for some others it is negative, while naturally all agree upon there being a relation between the variable pair ($Z\leftrightarrow R$) opposed to there being no relation ($Z\not\leftrightarrow R$).

We observe a systematic approach and thereby non-random approach to edge-/structure-selection by the human operators, see any of the subplots within appendix Fig.\ref{fig:exp3remainder}. Furthermore, there are only a few clusters even with increasing hypothesis space. Both the systematic manner and the tendency to common ground are evidence in support of the MMC hypothesis (MM $\equiv$ SCM, Hyp.\ref{hypothesis:cmmc}) and its implied argument on ``true'' SCM information reachable from the overlapping MM-based SCMs or SCMs.

Although we randomize the order of variables in addition to consistently presenting them in a simple line with the intention of not inducing any specific sorting/structure to avoid bias, we still observed apparent, unintended subject behavior. For instance, subject number 5 only considered pairs presented next to each other as being questioned although the other combinations are meant to be queried as well. While additional research needs to corroborate these observations, our data suggests that attention might have decreased over the course of the experiment for a subset of subjects as suggested by e.g. subject number 7 where overall agreement with the subject majority is to be found but eventually at the very last example ``mistakes'' occur (specifically, the subject highlighted that ``increasing age increases mobility'', in stark disagreement with the majority of participants). We also observe that the increase in hypothesis/search space (i.e., more variables) comes with an increase in variance. This variance increase can be argued to be due to the progressive difficulty of inference problems as well as decreased levels of attention and potential fatigue across the duration of the experiment (e.g. consider the duplicate plots, third column, in appendix Fig.\ref{fig:exp3remainder} where the number of unique structures that are being identified increases significantly). Yet another interesting observation concerns the aspect of time, consider subject number 17 where there is a cycle between treatment and recovery where the subject likely thought in terms of ``increasing treatment increases speed of recovery \emph{which subsequently} feeds back into a decrease of treatment (since the individual is better off than before)'' which seems like a valid inference but clearly considers the arrow of time. Yet another observation, some subjects faced questions of variable scope e.g.\ if there is a causal connection between food habits and mobility, then some subjects considered energy as the mediator and since energy is not part of the variable scope, confusion might arise whether to place an edge between food habits and mobility or not. In fact, for such a scenario the correct answer is to place an edge, since there exists a causal path from food habits to mobility, via energy, even if energy is not displayed. I.e., in causality, an edge can/will talk implicitly about all the more fine-grained variables that are part of the causal edge/path.

The second data set is an instance of the famous Kidney Stone example \citep{peters2017elements}, where $Z$ is a confounder that indicates the pre-conditions in terms of e.g.\ the size of the kidney stone, and it also illustrates the famous Simpson paradox \citep{simpson1951interpretation,pearl2009causality,peters2017elements} where the recovery will favor one treatment in the overall statistics while being better for all of the non-consolidated views for the other treatment. We observe that not a single subject places the edge pre-condition to treatment ($Z\rightarrow T$) which is arguably at the core of Simpson's paradox. This observation gives an additional cue on why the phenomenon is called paradox because no human subject expects the existence of this connection and even actively neglect the existence.

We observe that the human-based SCE match the Ground Truth SCE \emph{perfectly} up to the R data set SCE, which is also the ``Result'' in Fig.\ref{fig:humancem} i.e., the ``Mode'' approach returns the correct SCE while the ``Greedy'' approach chooses the wrong edge type for $Z$ and $R$. After further investigation, we believe to have found several explanations for this ``human'' mistake that we discuss extensively in the appendix. On another note, we observe that the overall flawless performance of human-based SCE speaks for superiority over algorithmic graph learner-based SCE. To conclude this paragraph, let us appreciate one such drastic difference in explanations, which in fact occurred on our lead example ``Causal Hans'':
\textbox{\textbf{Humans}: \emph{``Hans's Mobility is bad because of his bad Health which is mostly due to his high Age, although his Food Habits are good.''} \\ 
\textbf{Machines}: \emph{``Hans's Mobility, in spite his high Age, is bad mostly because of his bad Health which is bad mostly due to his good Food Habits.''}}

\subsection{Other Noteworthy Aspects}
A statement on the role of the MMC hypothesis. An algorithm for SCE-based graph learning. Technical details to the experiments. A reference to the remaining appendix figures that make the bulk of the remaining appendix.

\paragraph{The Importance of MM $\equiv$ SCM for SCE.} While the MMC is a fundamental question that cuts to the core of human thinking and remains to be proven right or wrong (although we believe it to be true to the extent of representability through SCM), and while we used it to ultimately justify the usage of SCM to then derive the causal explanations we call SCE, still, to the actual existence and formalism of SCE the MMC's truth value is invariant. Put blantly, if the MMC were to be wrong, then the formalism of SCE and all proven properties remain \emph{the same}. However, if MMC were to be true, then SCE in fact become a ``stronger'' formalism for causal explanations since they'd have a direct link to the MM. More importantly, one could make the case that they'd represent a ``natural'' formal pendant to the vague human explanations.

\paragraph{Simpson's Paradox Example.} Consider the well-known Simpson's paradox example for the medical setting of Kidney stone treatments from \citep{charig1986comparison}. The setting is given by $T,K,R$ which are Treatment, Kidney Stone Size, and Recovery respectively, and further the graph is given by $T\rightarrow R, K\rightarrow \{T,R\}$. It is known that $T=0$ denotes open surgery and $T=1$ denotes Percutaneous nephrolithotomy (being a more involved procedure) and in the overall statistics for recovery of the patient (denoted by $R=1$) we observe $78\%$ versus $83\%$ respectively, suggesting that $T=1$ is the better option. Yet, when looking at the confounder $K$ values of patient recovery, we observe $93\%$ versus $87\%$ for a small kidney stone $K=0$ and $73\%$ versus $69\%$ for a large kidney stone $K=1$ respectively, suggesting that in fact $T=0$ is better instead. This is the ``paradoxical'' situation, which is sensible from the \emph{causal perspective}. If we now ask the single why-question for patient $i$ with say values $T=1,R=0,K=1$ on why $i$ did not recover $r_i<\mu^R$ (where $\mu^R$ is the mean recovery of the data set), then we obtain an SCE that reads as follows: \emph{``Patient $i$ did not recover because of the large kidney stone, although (s)he had Percutaneous nephrolithotomy.''}

\paragraph{Hidden Confounders in Semi-Markovian Models.} As we pointed out in the main text, SCE can naturally handle/extend to semi-Markovian models. For illustration, consider the non-Markovian alternative to the example from the paragraph above on Simpson's paradox, where $K$ is a hidden confounder i.e., we only observe $T\rightarrow R$ as the graph. In common settings found in for example \citep{xia2021causal}, we might at least be aware of the fact \emph{that there is} hidden confounding present between the two variables and thus have an additional (dashed) bi-directed edge between $T$ and $R$ (case 1) and in the arguably worst case, said variable is fully undetected (case 2, in this case it is not necessarily a hidden confounder but simply a hidden cause, since we don't know if it is confounding or not---confounding meaning the same thing as \emph{common cause}). Let's consider both cases, in case 1, the SCE for the same question as before would read as: \emph{``Patient $i$ did not recover although (s)he had Percutaneous nephrolithotomy.''} We note that simply the reasoning on $K$ is not being delivered, naturally, since $K$ is not in the SCM/CEM that the SCE process observes. For case 2, we'd observe the same reading due to the definition of the SCE construction. Here, however, we note that this case allows for a natural extension of SCE in which the reading could change to possibly, ``Patient $i$ did not recover because of \emph{an unknown reason}, although (s)he had Percutaneous nephrolithotomy.'' Note that this semi-Markovian SCE now allows for reasoning with ``unknown reasons'' since the hidden cause $K$ will certainly have a causal relation to $R$ (since $K$ is a cause) but the name of $K$ will not be revealed (since $K$ is hidden). With this example, we thus conclude that Markovianity can be leveraged by SCE.

\paragraph{Algorithm for SCE Regularization.} For sake of completion, we provide an explicit example algorithm for the simple penalty term we added in our setup for improving graph learning with available SCEs for the respective data set, consider the following:

\begin{minipage}{\textwidth}
  \begin{algorithm}[H]
    \caption{Graph Learning Using Arbitrary graph learner and SCE Regularization Penalty}
      \label{alg:inference}
        \textbf{Input}: Data $\mD$, Graph learner $\mathcal{M}_{\pmb{\theta}}$, Optimizer $\mathcal{O}$\\
        \textbf{Output}: SCM $H$, Causal Graph $G$
        \begin{algorithmic}[1] 
        \WHILE{$i \leq |\mathbf{I}|$}
        \STATE $H, l\leftarrow \mathcal{M}_{\pmb{\theta}}(\mathbf{D})$ \COMMENT{perform learning using available data}\\ 
        \STATE $Q_X, \mI^* \leftarrow \mI_i$ \COMMENT{acquire why question and corresponding SCE} \\
        \STATE $\mI \leftarrow \mathbf{E}(Q_X,H,\mD)$ \COMMENT{estimate's SCE of graph estimate} \\
        \STATE $l_{\mI}\leftarrow ||\hat{\mI}-\mI^*)||_2^2$ \COMMENT{compare estimate and ground truth}\\
        \STATE $\pmb{\theta}\leftarrow \mathcal{O}(l,\pmb{\theta})$ \COMMENT{parameter update using penalty} \\
        \ENDWHILE
        \STATE $H\leftarrow \mathcal{M}_{\pmb{\theta}}(\mathbf{D})$, and $G\leftarrow |\tanh(H)|$ \COMMENT{return arguments} \\
        \STATE \textbf{return} $H, G$
    \end{algorithmic}
  \end{algorithm}
\end{minipage}

\textbox{\textbf{Technical Details and Code.} \textit{All experiments are being performed on a MacBook Pro (13-inch, 2020, Four Thunderbolt 3 ports) laptop running a 2,3 GHz Quad-Core Intel Core i7 CPU with a 16 GB 3733 MHz LPDDR4X RAM on time scales ranging from a few seconds (e.g.\ evaluating SCE in Exp.2) up to approximately an hour (e.g.\ SCE-based learning in Exp.3). Our code is available at: \url{https://anonymous.4open.science/r/Structural-Causal-Explanations-D0E7/README.md}.}}

\clearpage
\begin{figure*}
\centering
\includegraphics[width=.99\textwidth]{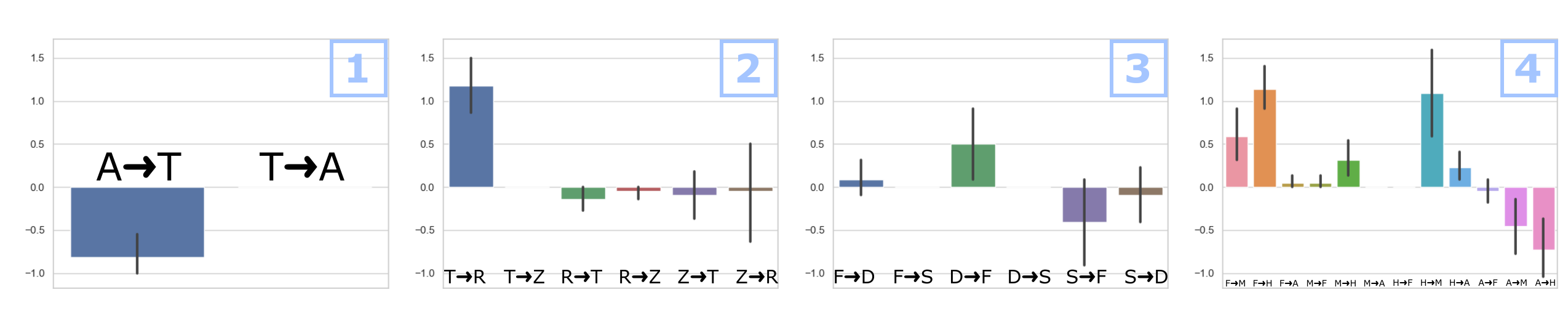}
\centering\resizebox{\textwidth}{!}{
\begin{tabular}{|ll|l|}
\hline
  4 & H1 & ``Hans's mobility is bad because of his bad health which is mostly due to his high age, although his nutrition is good.'' \\ 
  & H2 & =H1 \\ 
  & A & \textit{``Hans's mobility, in spite his high age, is bad mostly because of his bad health which is bad mostly due to his good nutrition.''} \\
  \hline
  2 & H1 & ``Kurt did not recover because of his bad pre-conditions, although he got treatment.'' \\ 
   & H2 & ``Kurt did not recover although his pre-condition and the fact he got treatment.'' \\ 
  & A & \textit{``Kurt did not recover because of his bad pre-cond., which were bad although he got treatment.''} \\
  \hline
\end{tabular}
}
\captionof{table}{\textbf{``Humans vs Machines''.} Top: Edge plots per example where the bars denote the average value of given relation and the errors confidence intervals. Bottom: The SCE generated for the two human variants (from main paper Fig.\ref{fig:humancem}) against a graph learner representative (NT, \citet{NEURIPS2018_e347c514}). Human explanations are (near-)identical to the ground truth from Tab.\ref{tab:gt-vs-nt}. (Best viewed in color.)}
\label{tab:humansvmachines}
\end{figure*}

\begin{table}[t]
\begin{tabularx}{\linewidth}{p{2em}|X}
 \hline
  & Structural Causal Interpretation (Thm.\ref{thm:gimsci}) \\ [0.5ex] 
 \hline\hline
 \#1 & Dataset: DW, \textcolor{blue}{Query: ``Why is the temperature at the Matterhorn low?''} \\
 GT & \textcolor{orange}{``The temperature at the Matterhorn is low because of the high altitude.''}\\
 \hline
 $\mathcal{M}_1$ & ``The temperature at the Matterhorn is low because of the high altitude.'' \\
 $\mathcal{M}_2$ & ``The temperature at the Matterhorn is low because of the high altitude.'' \\
 $\mathcal{M}_3$ & ``No causal explanation for Matterhorn's temperature.'' \\
 \hline
 
 \#2 & Dataset: DW, \textcolor{blue}{Query: ``Why is the Matterhorn so high?''} \\
 GT & \textcolor{orange}{``No causal explanation for Matterhorn's altitude.''}\\
 \hline
 $\mathcal{M}_1$ & ``No causal explanation for Matterhorn's altitude.'' \\
 $\mathcal{M}_2$ & ``No causal explanation for Matterhorn's altitude.'' \\
 $\mathcal{M}_3$ & ``The altitude of the Matterhorn is high because of the low temperature.'' \\
 \hline
 
 \#3 & Dataset: CH, \textcolor{blue}{Query: ``Why is Hans's mobility bad?''} \\
 GT & \textcolor{orange}{``Hans's mobility is bad because of his bad health which is mostly due to his high age, although his nutrition is good.''}\\
 \hline
 $\mathcal{M}_1$ & ``Hans's mobility is bad because of his bad health which is bad because of high age and mostly due to his good nutrition.'' \\
 $\mathcal{M}_2$ & ``Hans's mobility is bad because of his good nutrition.'' \\
 $\mathcal{M}_3$ & ``No causal explanation for Hans's bad mobility.'' \\
 \hline
 
  \#4 & Dataset: CH, \textcolor{blue}{Query: ``Why is Hans old?''} \\
 GT & \textcolor{orange}{``No causal explanation for Hans being old.''}\\
 \hline
 $\mathcal{M}_1$ & ``No causal explanation for Hans being old.'' \\
 $\mathcal{M}_2$ & ``No causal explanation for Hans being old.'' \\
 $\mathcal{M}_3$ & {\footnotesize ``Hans is old because of his good nutrition and bad mobility, which is because of his bad health.''} \\
 \hline
 
  \#5 & Dataset: CH, \textcolor{blue}{Query: ``Why is Hans's nutrition good?''} \\
 GT & \textcolor{orange}{``Hans's nutrition is good because of being older.''}\\
 \hline
 $\mathcal{M}_1$ & ``Hans's nutrition is good because of being older.'' \\
 $\mathcal{M}_2$ & ``No causal explanation for Hans's nutrition.'' \\
 $\mathcal{M}_3$ & ``Hans's nutrition is good because of his bad health and mobility.'' \\
 \hline
 
 \#6 & Dataset: M, \textcolor{blue}{Query: ``Why is your personal car's left mileage low?''} \\
 GT & \textcolor{orange}{``Your left mileage is low because of your small car and your bad driving style.''}\\
 \hline
 $\mathcal{M}_1$ & {\footnotesize ``Your left mileage is low mostly because of your small car and because of your bad driving style.''} \\
 $\mathcal{M}_2$ & ``No causal explanation for the left mileage.'' \\
 $\mathcal{M}_3$ & ``Your left mileage is low because of your small car and your bad driving style.'' \\
 \hline
 
 \#7 & Dataset: M, \textcolor{blue}{Query: ``Why is your personal car small?''} \\
 GT & \textcolor{orange}{``No causal explanation for the car size.''}\\
 \hline
 $\mathcal{M}_1$ & ``No causal explanation for the car size.'' \\
 $\mathcal{M}_2$ & ``Your personal car's size is small because of your good driving style and fuel savings.'' \\
 $\mathcal{M}_3$ & ``No causal explanation for the car size.'' \\
 \hline
 
 \#8 & Dataset: R, \textcolor{blue}{Query: ``Why did Kurt not recover?''} \\
 GT & \textcolor{orange}{{\footnotesize ``Kurt did mostly not recover because of his bad pre-conditions, although he got treatment.''}}\\
 \hline
 $\mathcal{M}_1$ & {\footnotesize ``Kurt did not recover because of his bad pre-conditions which is because of the treatment he got.''} \\
 $\mathcal{M}_2$ & ``No causal explanation for Kurt's recovery.'' \\
 $\mathcal{M}_3$ & ``No causal explanation for Kurt's recovery.'' \\
 \hline
 
 \#9 & Dataset: R, \textcolor{blue}{Query: ``Why did Kurt get treatment?''} \\
 GT & \textcolor{orange}{``Kurt got treatment because of his bad pre-conditions.''}\\
 \hline
 $\mathcal{M}_1$ & ``No causal explanation for Kurt's received treatment.'' \\
 $\mathcal{M}_2$ & ``Kurt got treatment because of his bad pre-conditions.'' \\
 $\mathcal{M}_3$ & ``No causal explanation for Kurt's received treatment.'' \\
 \hline
\end{tabularx}
\caption{\textbf{More NIM-based SCE.} We prove Thm.\ref{thm:gimsci} for general NIM while pointing to some example methods from the existing literature on NIM. Here we show the results of running the methods $\mathcal{M}_i$, 1:NT \citep{NEURIPS2018_e347c514}, 2:CGNN \citep{goudet2018learning}, 3:DAG-GNN \citep{yu2019dag} on the four data sets weather forecast (W, \cite{mooij2016distinguishing}), health (H), mileage (M), and recovery (R, \cite{charig1986comparison}) for the respective queries. As suggested, the methods are explainable and reveal insights onto the learned causal semantics, while varying significantly in quality in terms of accuracy relative to the ground truth (GT). Independent of accuracy, ``No causal explanation \dots'' occur when the SCM estimate of $\mathcal{M}_i$ contains no causal path to the queried variable $X$ i.e., $\pa_X=\emptyset$ (supported through GT sparsity). We also show GT explanations that require a negative ``no answer'' response by $\mathcal{M}_i$.}
\label{tab:app-remainder-SCI}
\end{table}

\begin{figure}[t]
    \centering
    \includegraphics[width=1\textwidth]{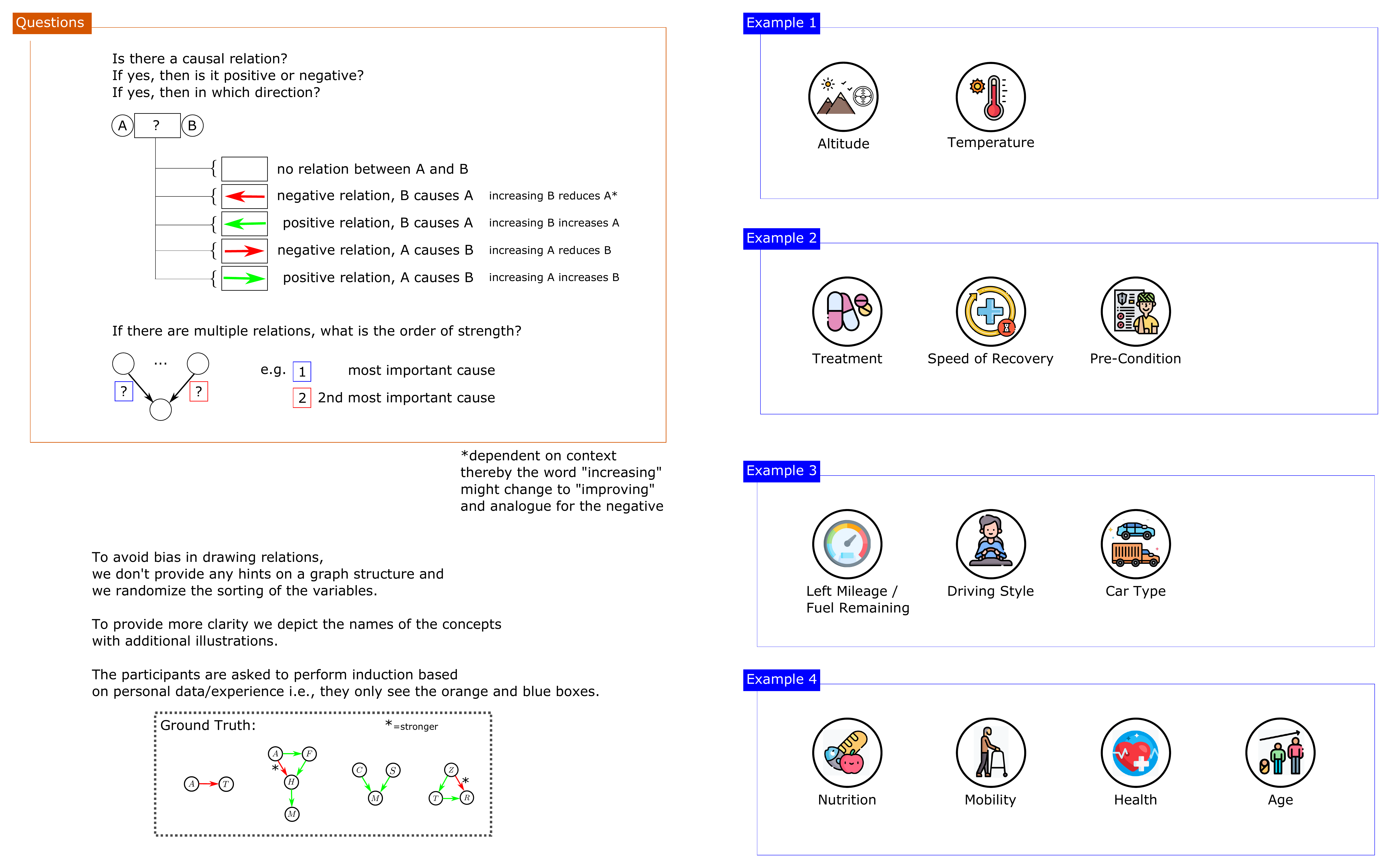}
    \caption{\textbf{Experiment Setup for the Human Case Study.} \label{fig:exp3setup}
    The participants are being asked two questions: whether there is a directed relation between some variable pair $A$ and $B$, and when there are multiple causes how they behave relatively i.e., the order of strength in relations. We avoid bias in drawing relations by randomizing the order and presenting the variables in a sequence. Induction is being performed from personal ``data''/experience, rather than by looking at a matrix of data points. (Best viewed in color.)
    }
\end{figure}

\begin{figure}[t]
    \centering
    \includegraphics[width=1\textwidth]{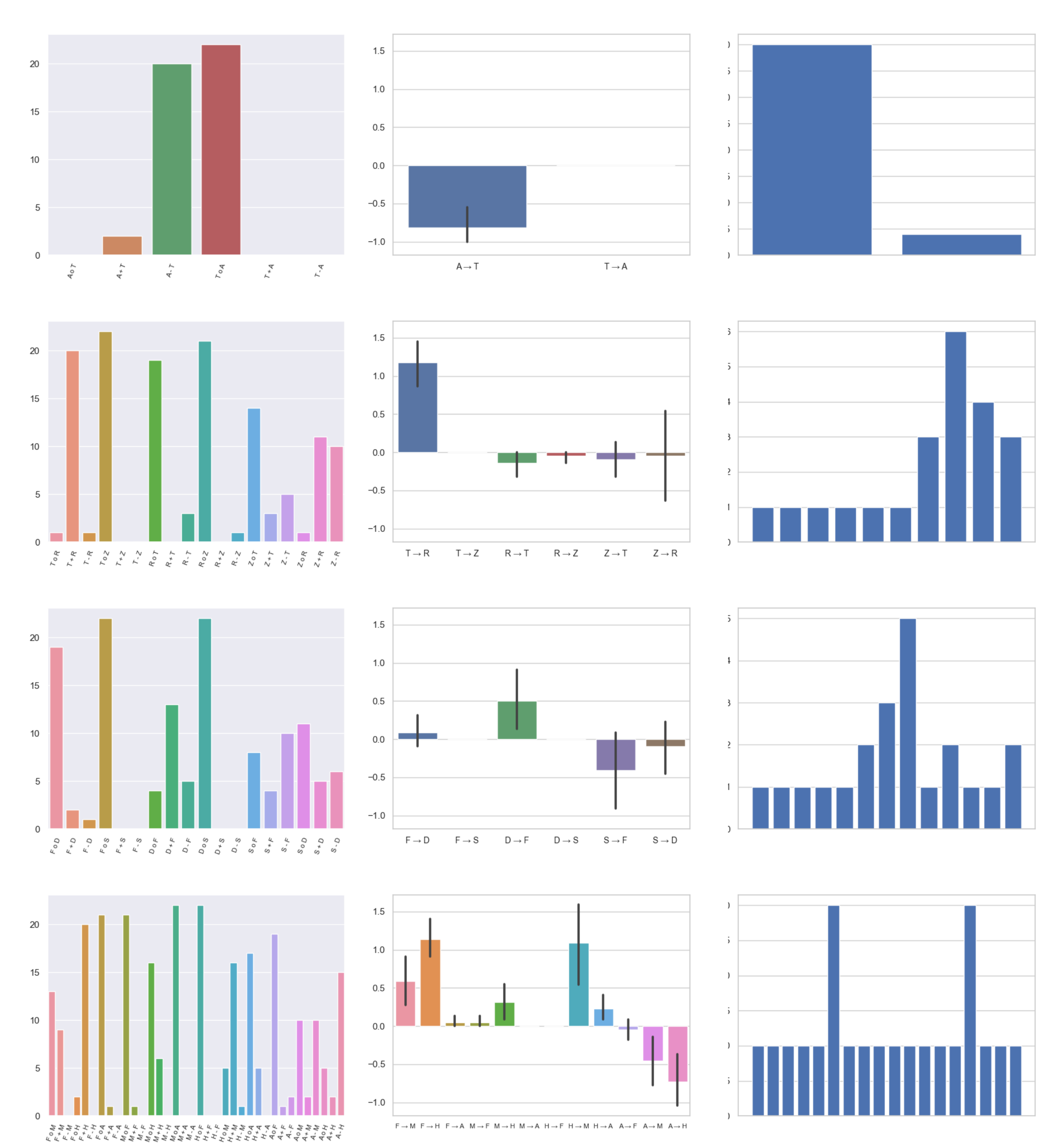}
    \caption{\textbf{Human Data Analysis: Qualitative, Quantitative, and Uniqueness.} \label{fig:exp3remainder}
    Statistics collected from the human data ($N=22$). The rows denote the four data sets: weather forecast (W, \cite{mooij2016distinguishing}), health (H), mileage (M), and recovery (R, \cite{charig1986comparison}). The columns: qualitative edge distributions that show for each of the different edge type how often it was chosen respectively (left), quantitative edge distribution for each edge where the error bars denote confidence intervals (middle), and the unique structure counts where each bar depicts the frequency of a qualitative structure discovered by the human subjects (right). Extensive elaboration on the setup, execution and results of this human study are to be found in the corresponding appendix section. (Best viewed in color.)
    }
\end{figure}
\end{document}